\documentclass[10pt]{article} 
\usepackage[preprint]{tmlr}


\usepackage{amsmath,amsfonts,bm}









\def\eqref#1{equation~\ref{#1}}









\def\1{\bm{1}}


\def\ra{{\textnormal{a}}}


\def\rx{{\textnormal{x}}}


\def\rva{{\mathbf{a}}}

\def\erva{{\textnormal{a}}}

\def\ervx{{\textnormal{x}}}

\def\rmA{{\mathbf{A}}}



\def\vmu{{\bm{\mu}}}
\def\vtheta{{\bm{\theta}}}
\def\va{{\bm{a}}}

\def\ve{{\bm{e}}}

\def\vg{{\bm{g}}}

\def\vs{{\bm{s}}}

\def\vv{{\bm{v}}}

\def\vx{{\bm{x}}}
\def\vy{{\bm{y}}}


\def\eva{{a}}

\def\mA{{\bm{A}}}

\def\mH{{\bm{H}}}
\def\mI{{\bm{I}}}
\def\mJ{{\bm{J}}}

\def\mX{{\bm{X}}}

\def\mSigma{{\bm{\Sigma}}}

\DeclareMathAlphabet{\mathsfit}{\encodingdefault}{\sfdefault}{m}{sl}
\SetMathAlphabet{\mathsfit}{bold}{\encodingdefault}{\sfdefault}{bx}{n}
\newcommand{\tens}[1]{\bm{\mathsfit{#1}}}
\def\tA{{\tens{A}}}

\def\tX{{\tens{X}}}


\def\gG{{\mathcal{G}}}

\def\sA{{\mathbb{A}}}
\def\sB{{\mathbb{B}}}


\def\sS{{\mathbb{S}}}


\def\emA{{A}}

\newcommand{\etens}[1]{\mathsfit{#1}}

\def\etA{{\etens{A}}}






\newcommand{\E}{\mathbb{E}}

\newcommand{\R}{\mathbb{R}}

\newcommand{\KL}{D_{\mathrm{KL}}}
\newcommand{\Var}{\mathrm{Var}}

\newcommand{\Cov}{\mathrm{Cov}}

\newcommand{\normltwo}{L^2}
\newcommand{\normlp}{L^p}

\newcommand{\parents}{Pa} 

\newcommand{\method}{mL-BFGS}
\newcommand{\Loss}{\mathcal{L}}
\newcommand{\btheta}{\bm{\theta}}

\newcommand{\Hm}{\hat{H}}
\newcommand{\HI}{\hat{H}}

\newcommand{\Mp}[1]{\mathcal{M}_{\btheta_{#1}}}
\newcommand{\Mg}[1]{\mathcal{M}_{\bm{g}_{#1}}}
\newcommand{\Mn}[1]{\mathcal{M}_{\bm{n}_{#1}}}

\newcommand{\sv}[1]{\bm{s}_{#1}}
\newcommand{\yv}[1]{\bm{y}_{#1}}
\newcommand{\vvv}[1]{\bm{v}_{#1}}
\newcommand{\yyv}[1]{\hat{\bm{y}}_{#1}}

\newcommand{\Cfb}{C_{\text{fb}}}
\newcommand{\Mfb}{M_{\text{fb}}}
\newcommand{\Copt}{C_{\text{opt}}}

\newcommand{\grad}[1]{\nabla\mathcal{#1}}
\newcommand{\norm}[1]{\left \| #1 \right \|}
\newcommand{\inprod}[2]{\left \langle #1, #2 \right \rangle}
\usepackage{bm}
\usepackage{amsmath}
\usepackage{amssymb}
\usepackage{mathtools}
\usepackage{amsthm}
\usepackage{booktabs}
\usepackage{tablefootnote}
\usepackage{subcaption}
\usepackage{hyperref}
\usepackage{url}
\usepackage{wrapfig}
\usepackage{algorithm}

\usepackage{algorithmic}

\AtBeginEnvironment{algorithmic}{}

\title{mL-BFGS: A \underline{M}omentum-based L-BFGS for Distributed Large-Scale Neural Network Optimization}


\author{\name Yue Niu \email yueniu@usc.edu \\
      \addr Department of Electrical and Computer Engineering\\
      University of Southern California \\
      \name Zalan Fabian \email zfabian@usc.edu \\
      \addr Department of Electrical and Computer Engineering \\
      University of Southern California \\
      \name Sunwoo Lee \email sunwool@inha.ac.kr\\
      \addr Department of Computer Science and Engineering, \\
      Inha University \\
      \name Mahdi Soltanolkotabi \email soltanol@usc.edu \\
      \addr Department of Electrical and Computer Engineering \\
      University of Southern California \\
      \name Salman Avestimehr \email avestime@usc.edu \\
      \addr Department of Electrical and Computer Engineering \\
      University of Southern California
      }



\begin{document}

\theoremstyle{plain}
\newtheorem{theorem}{Theorem}
\newtheorem{proposition}{Proposition}
\newtheorem{lemma}{Lemma}
\newtheorem{corollary}{Corollary}
\theoremstyle{definition}
\newtheorem{definition}{Definition}

\theoremstyle{remark}
\newtheorem{remark}{Remark}
\newtheorem{assumption}{AS}

\maketitle

\begin{abstract}
Quasi-Newton methods still face significant challenges in training large-scale neural networks due to additional compute costs in the Hessian related computations and instability issues in stochastic training.
A well-known method, L-BFGS that efficiently approximates the Hessian using history parameter and gradient changes, suffers convergence instability in stochastic training.
So far, attempts that adapt L-BFGS to large-scale stochastic training incur considerable extra overhead, which offsets its convergence benefits in wall-clock time.
In this paper, we propose \method{}, a lightweight momentum-based L-BFGS algorithm that paves the way for quasi-Newton (QN) methods in large-scale distributed deep neural network (DNN) optimization. 
\method{} introduces a nearly cost-free momentum scheme into L-BFGS update and greatly reduces stochastic noise in the Hessian, therefore stabilizing convergence during stochastic optimization.
For model training at a large scale, \method{} approximates a block-wise Hessian, thus enabling distributing compute and memory costs across all computing nodes.
We provide a supporting convergence analysis for \method{} in stochastic settings.
To investigate \method{}’s potential in large-scale DNN training, we train benchmark neural models using \method{} and compare performance with baselines (SGD, Adam, and other quasi-Newton methods). 
Results show that \method{} achieves both noticeable iteration-wise and wall-clock speedup.
\end{abstract}

\section{Introduction}\label{sec:intro}
In supervised learning, a typical task is to minimize a empirical risk function,
\begin{equation}\label{eq:loss}
    \underset{\btheta}{\min}\text{ }\mathcal{L}(\btheta; \mathcal{X}) := \frac{1}{N}\sum_{i=1}^N\ell(\btheta; x_i, y_i),
\end{equation}
where $\btheta \in \mathbb{R}^d$ denote the parameters to be optimized, and $\mathcal{X}$ represents training samples $\left \{ x_i, y_i \right \}_{i=1}^N$. 

At present, stochastic gradient descent method (SGD) and its variants, such as Adam \citep{2014_arXiv_Adam} are the preferred methods to optimize parameters $\btheta$ due to their simplicity, especially for large-scale machine learning problems.
Nevertheless, second-order or quasi-Newton (QN) methods have also been extensively investigated due to their superior convergence over gradient descent (GD) in strongly convex optimization settings \citep{2019_OMS_Superlinear,2021_SIAM_Superlinear}.

However, the Achilles heel of QN methods that is impeding their wide adoption for large-scale machine learning problems is their substantial compute and memory costs.
Specifically, these barriers stem from attaining second-order information, including computing and storing the Hessian, performing matrix inversion, etc. 
For large-scale neural networks, these operations pose daunting challenges to QN's implementations and significantly affect running time.
Moreover, second-order methods are inherently hard to parallelize as the Hessian inversion usually involves many sequential steps, making it difficult to leverage large-scale distributed systems to partition computations and memory across multiple nodes.
As a result, even though superior convergence performance is observed in strongly convex settings, it still remains unclear how to convey such benefits to large-scale model training in distributed settings.

Due to the prohibitive challenge above, approximation methods are getting increasing attention that attain second-order information by formulating the Hessian inverse in different ways.
These methods, to some extent, open the door for second-order methods to large-scale machine learning. 
Among them, two lines have proved very promising.
The first one arises from the Fisher information matrix $I$ (expectation of the Hessian under a negative log-likelihood loss).  
Methods such as KFAC \citep{2015_ICML_KFAC,2016_ICLR_distKFAC,2020_SC_KFAC} first approximate $I$ and then simplify matrix inversion by decomposing $I$ into small submatrices.
However, the considerable overhead for obtaining the empirical Fisher information and its inverse greatly neutralizes its faster per-iteration convergence promises.
Another line of the approximation methods directly approximates the Hessian inverse via BFGS update.
With pairs of history gradient and parameter changes, BFGS directly approaches the Hessian inverse with no additional costs on computing and storing the Hessian matrix.
However, BFGS or its variant L-BFGS \citep{1980_LBFGS} has not proved efficient in large-scale stochastic optimization, where convergence instability is commonly observed. 
Additional operations introduced in recent works stabilize the training but with substantial costs \citep{2015_JMLR_LBFGS,2016_PMLR_LBFGS,2016_ICML_BlockBFGS}.
For instance, \cite{2016_PMLR_LBFGS} uses a separate large batch of inputs to compute the consistent gradient and parameter changes, which inevitably increase the wall-clock time.  
Moreover, moving to distributed systems, these methods are not amenable to efficient parallelization due to either direct matrix inverse (e.g., KFAC) or the iterative procedure in BFGS-like methods.


To simultaneously mitigate the challenges of compute and memory costs, and scalability in distributed systems, we propose \method{}, a stable distributed BFGS variant that preserves the convergence promises of second-order methods, with modest compute and memory costs compared to other QN methods.
\method{} addresses the aforementioned barriers as follows. 
First, \method{} introduces an almost cost-free momentum scheme into the BFGS update rule to stabilize the optimization. 
By using the momentum of the \emph{history statistics} (parameter and gradient changes) to estimate the Hessian inverse, \method{} smooths out the approximation with no needs of costly variance reduction methods (e.g., a separate large batch size to estimate the Hessian). Hence, both stability and compute efficiency are achieved.
For efficient parallelization and low memory footprint in distributed systems, \method{} presents a more generic block-wise Hessian approximation with each block consisting of one or multiple layers.
During optimization, each node only computes and stores statistics for one block rather than the full Hessian.  
As a result, \method{} can perform the Hessian inverse and gradient conditioning in a distributed way with marginal communication and synchronization overhead.

Our theoretical analysis shows \method{} effectively suppresses noise in the Hessian approximation and achieves stable convergence.
Empirical evaluations show that, on benchmark datasets, CIFAR-10 and ImageNet, and models such as ResNet and Vision Transformer, \method{} achieves a faster per-iteration convergence compared to SGD and Adam. 
Furthermore, due to the lightweight momentum-based Hessian approximation, \method{} needs a much shorter wall-clock time to achieve the target accuracy compared to SGD, Adam and other QN methods such as KFAC.

In summary, our main contributions are as follows:

1. We develop \method{}, a distributed stochastic QN method for large-scale models, that achieves fast and stable convergence with low computational complexity.

2. We provide theoretical analyses that show \method{} significantly mitigates adverse effects of stochastic noise on the Hessian approximation and achieves stable convergence for stochastic optimization problems.

3. We provide complexity analyses that demonstrate \method{} incurs much less complexity than other QN methods, leading to reductions in overall wall-clock training time.

4. Finally, we carry out comprehensive evaluations on various models and datasets that show \method{} empirically delivers faster per-iteration and wall-clock convergence compared to SGD, Adam, and other second-order optimizers. 

\section{Preliminaries}\label{sec:prem}
The risk function $\mathcal{L}(\btheta, \mathcal{X})$ in Eq (\ref{eq:loss}) is usually optimized through a form of gradient descent as:
\begin{equation}\label{eq:ParamUpdateQN}
    \btheta_{t+1} = \btheta_{t} - \eta_{t}\cdot \HI\cdot\bm{g}_t,
\end{equation}
where $\eta_{t}$ denotes step size (learning rate) at iteration $t$ and $\HI$ is a gradient pre-conditioner.
In stochastic training, gradients are evaluated on a mini-batch input $\tX_t \subseteq \mathcal{X}$, namely $\bm{g}_t=\nabla_{\btheta} \mathcal{L}(\btheta_t, \tX_t)$. 

If $\HI$ is an identity matrix, the update above is reduced to SGD, whereas if $\HI$ is a diagonal matrix, it becomes an adaptive training algorithm such as Adagrad \citep{2011_JMLR_AdaGrad} or Adam \citep{2014_arXiv_Adam}. 
To further improve convergence performance, esp. in an ill-conditioned problem \citep{convexOpt}, it is desired to incorporate more second-order information into $\HI$ as done in quasi-Newton (QN) methods.

A prime challenge in QN methods is the evaluation of $\Hm$ and in particular its inverse. 
A well-known Broyden–Fletcher–Goldfarb–Shanno (BFGS) algorithm \citep{BFGS} addresses the challenge by formulating the Hessian inverse as a minimization problem:
\begin{equation}\label{eq:BFGSformulation}
\begin{split}
    & \min_{\HI} \quad\left \| \HI - \HI_{k-1}\right \|^2, \\
    & \text{s.t.} \quad\HI\cdot \bm{y}_k = \bm{s}_k,\quad \HI \text{ is symmetric},
\end{split}
\end{equation}
where $\bm{s}_k = \btheta_k - \btheta_{k-1}$ denotes the parameter changes, and $\bm{y}_k = \bm{g}_k-\bm{g}_{k-1} $ the gradient changes in two consecutive updates \footnote{$k$ rather than $t$ is used in the equation as parameter/gradient used might be different from the one in  Eq (\ref{eq:ParamUpdateQN})}.
By imposing the \emph{secant} condition during minimization, BFGS gradually attains the curvature information close to the real Hessian. \cite{2021_SIAM_Superlinear} establishes that BFGS converges to the real Hessian by a greedy strategy of choosing $(\vs_k, \vy_k)$.

Knowing $\HI_{k-1}$, the current $\HI$ is obtained via:
\begin{equation}\label{eq::BFGSupdate}
    \HI_k = (I - \rho_k\bm{y}_k\bm{s}_k^T)^T\HI_{k-1}(I-\rho_k\bm{y}_k\bm{s}_k^T)+\rho_k\bm{s}_k\bm{s}_k^T,
\end{equation}
where $\rho_k = \frac{1}{\bm{y}_k^T \bm{s}_k}$. 
Hence the Hessian inverse $\HI$ is constructed in an iterative manner with no need to compute the Hessian matrix.

To simplify computation in the Hessian-vector product, $\HI$ in BFGS is stored in the form of a sequence of history vectors $\left \{ \bm{y}_i \right \}$ and $\left \{ \bm{s}_i \right \}$. The matrix-vector product $\HI_k\cdot\bm{g}_t$ is replaced by a sequence of fast vector-vector products as shown in Algorithm~\ref{alg:HessianVecProd} (See Appendix \ref{appx:hessianvec}).
Furthermore, a limited-memory BFGS, L-BFGS \citep{1980_LBFGS} is usually adopted that only uses several latest history vectors when approximating the Hessian inverse.

\section{\method{}}\label{sec:method}
\method{} consists of two crucial techniques to improve convergence stability and scalability of using L-BFGS in large-scale models.
First, \method{} introduces momentum into the Hessian approximation. The momentum-based design effectively reduces the adverse effects of stochastic noise while without resorting to costly noise-reduction techniques as in current solutions \citep{2016_PMLR_LBFGS}. 
Second, in distributed training, \method{} allows a block-wise approximation along arbitrary diagonal blocks. \method{} can flexibly assign each computing node a block that comprises multiple layers to distribute the workload.
We describe the method in detail below.

\subsection{Momentum-based Hessian: Reduce Effects of Stochastic Noise}
While momentum is widely used in first-order methods, it is rarely explored in the second-order domain.
Surprisingly, we find that momentum is also a crucial component for a stable Hessian approximation. 
Furthermore, compared to other noise-reduction methods, the momentum-based design is almost cost-free. 

In this paper, we apply momentum to past parameters and gradients as
\begin{equation}\label{eq:momentum}
\begin{split}
    \btheta: \Mp{t} &= \beta\cdot\Mp{t-1} + (1-\beta)\btheta_t, \\
    \bm{g}: \Mg{t} &= \beta\cdot\Mg{t-1} + (1-\beta)\bm{g}_t,
\end{split}
\end{equation}
where $\btheta_t,\vg_t$ denotes parameters and gradients at $t$-th iteration, $\beta$ is the momentum coefficient.

Following the BFGS update rule in Eq~(\ref{eq:BFGSformulation}) and assuming that $\HI$ is updated for every $T$ mini-batch iterations, $\bm{s}_k$ and $\bm{g}_k$ are obtained as
\begin{equation}\label{eq:skyk}
    \bm{s}_k = \Mp{(k+1)T} - \Mp{kT}, \quad \bm{y}_k = \Mg{(k+1)T} - \Mg{kT}.
\end{equation}

This simple but effective technique works surprisingly well when gradients are noisy. To intuitively show improvements of using momentum over the vanilla L-BFGS, we visualize the stochastic optimization of a simple quadratic loss function, $\mathcal{L} = \frac{1}{2}\left \| \btheta \right \|^2$. 
We simulate gradients in stochastic settings at iteration $t$ as $\bm{g}_t=\btheta_t + \bm{n}_t$, where $\bm{n}_t$ denotes the \emph{stochastic noise}. 
We model the noise as i.i.d. Gaussian, namely $\bm{n}_t \sim \mathcal{N}(0, \sigma^2)$. 
Figure~\ref{fig::naiveopt} shows optimization trajectories for SGD, vanilla L-BFGS, L-BFGS with momentum and the exact 2nd-order method given $\sigma=0.2$. 
With the same initial point, we observe that L-BFGS with momentum is as fast as the exact 2nd-order method, and much faster than SGD. On the other hand, vanilla L-BFGS obviously suffers convergence issues due to noisy $\bm{y}_k$.

\begin{wrapfigure}{r}{0.5\textwidth}
    \centering
    \includegraphics[width=.49\textwidth]{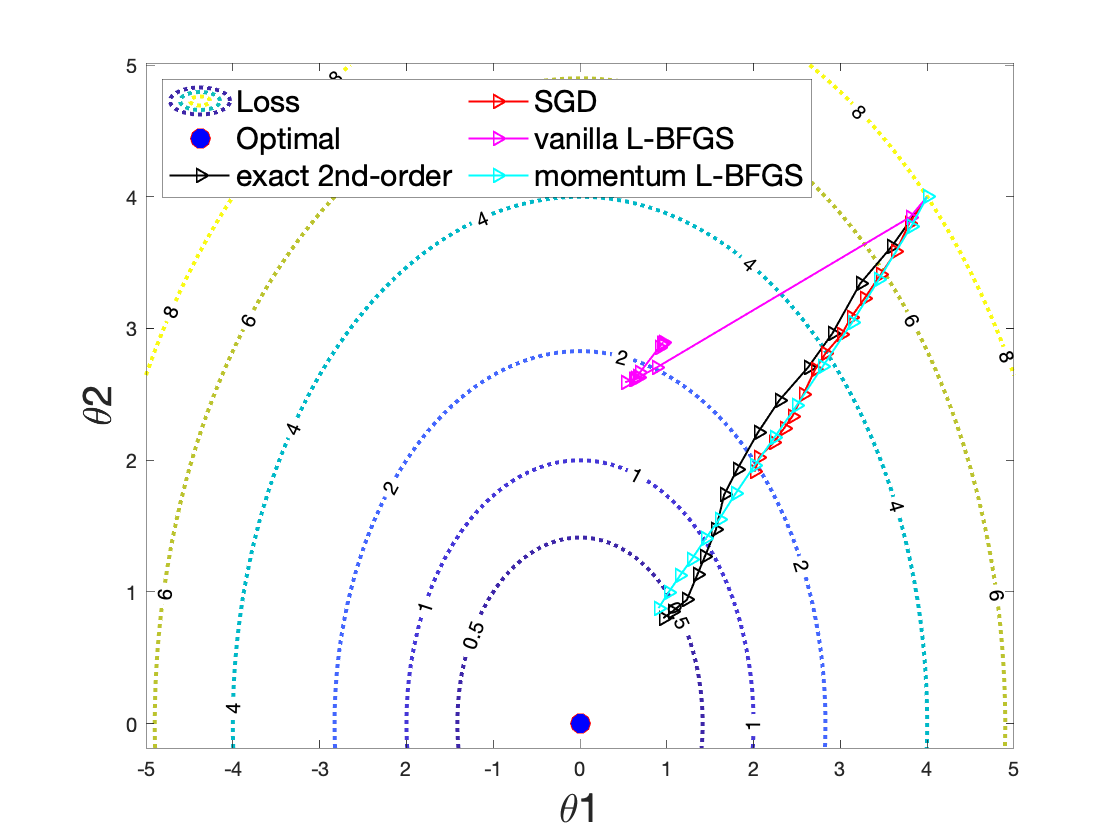}
    \caption{\footnotesize Optimization using SGD, vanilla L-BFGS, L-BFGS with momentum ($\beta=0.9$) and exact 2nd-order method. Vanilla L-BFGS fails to find the desirable optimization path. However, momentum reduces noise in gradients and significantly stabilizes the optimization.}
    \label{fig::naiveopt}
    \vspace{-0.2cm}
\end{wrapfigure}

\textbf{Analysis - } We use a quadratic function as a showcase to analyze important theoretical properties of the momentum-based design: 
\begin{equation*}
    \Loss(\btheta) = \Loss(\btheta_0) + \bm{g}(\btheta_0)(\btheta - \btheta_0) + \frac{1}{2}(\btheta-\btheta_0)^{*}B(\btheta-\btheta_0).
\end{equation*}
The following lemmas and theorem establish theoretical improvement of using momentum over vanilla L-BFGS.
Specifically, Lemma \ref{lemma:noise} shows that momentum in Eq (\ref{eq:momentum}) significantly reduces stochastic noise in gradient changes $\yv{k}$. 
Lemma \ref{lemma:equivalence} further provides a theoretical guarantee that using momentum can still obtain the same Hessian approximation as the vanilla L-BFGS algorithm in the noise-free case. 

Finally, Theorem \ref{theorem:simpleconvex} shows that in stochastic settings, the achievable loss by L-BFGS is lower bounded by a value depending on noise variance in $\vy_k$. Combined with Lemma \ref{lemma:noise} and \ref{lemma:equivalence}, it is obvious that the momentum-based approximation achieves a much lower bound compared to the vanilla version. Such a conclusion is aligned with our empirical observations in Figure \ref{fig::naiveopt}.

\begin{lemma}\label{lemma:noise}
Given $\mathcal{L}(\btheta)$, and assuming $\bm{g}_t$ in iteration $t$ contains Gaussian noise $\bm{n}_t$ with zero mean and $E[\left \| \bm{n}_t \right \|^2]\leq \epsilon^2$, then the noise variance in $\Mg{t}$ in Eq (\ref{eq:momentum}) is reduced to $\frac{1-\beta}{1+\beta}\epsilon^2$ as $t\rightarrow\infty$. Furthermore, the noise variance in $\yv{k}$ is reduced to $\frac{4(1-\beta)}{1+\beta}\epsilon^2$
\end{lemma}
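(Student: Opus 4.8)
The plan is to exploit the linearity of the momentum recursion in Eq~(\ref{eq:momentum}) to split the gradient momentum into a deterministic ``signal'' part and a stochastic ``noise'' part. Since the noise enters additively through $\bm{g}_t = \nabla\mathcal{L}(\btheta_t) + \bm{n}_t$ and the map $\Mg{t} = \beta\Mg{t-1} + (1-\beta)\bm{g}_t$ is affine, the noise component $\Mn{t}$ of $\Mg{t}$ obeys the same recursion $\Mn{t} = \beta\Mn{t-1} + (1-\beta)\bm{n}_t$. The quantity called ``noise variance in $\Mg{t}$'' is precisely $\E[\norm{\Mn{t}}^2]$, so the first task is to bound this expected squared norm.

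First I would unroll the recursion from $\Mn{0}=0$ to obtain $\Mn{t} = (1-\beta)\sum_{j=1}^{t}\beta^{t-j}\bm{n}_j$, a weighted sum of independent, zero-mean noise vectors. Taking the expected squared norm, all cross terms $\E[\langle\bm{n}_i,\bm{n}_j\rangle]$ vanish for $i\neq j$ by independence and zero mean, leaving only the diagonal contributions:
\begin{equation*}
\E[\norm{\Mn{t}}^2] = (1-\beta)^2\sum_{j=1}^{t}\beta^{2(t-j)}\E[\norm{\bm{n}_j}^2] \leq (1-\beta)^2\eps^2\sum_{k=0}^{t-1}\beta^{2k}.
\end{equation*}
The geometric sum equals $\frac{1-\beta^{2t}}{1-\beta^2}$, which tends to $\frac{1}{1-\beta^2}$ as $t\to\infty$. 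Using $\frac{(1-\beta)^2}{1-\beta^2} = \frac{1-\beta}{1+\beta}$ gives the first claim $\E[\norm{\Mn{t}}^2]\to\frac{1-\beta}{1+\beta}\eps^2$.

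For the second claim, the noise component of $\yv{k} = \Mg{(k+1)T} - \Mg{kT}$ is $\Mn{(k+1)T} - \Mn{kT}$. Applying the elementary bound $\norm{\bm{a}-\bm{b}}^2 \leq 2\norm{\bm{a}}^2 + 2\norm{\bm{b}}^2$ and taking expectations bounds the $\yv{k}$ noise by $2\E[\norm{\Mn{(k+1)T}}^2] + 2\E[\norm{\Mn{kT}}^2]$; since both indices grow with $t$, each term converges to $\frac{1-\beta}{1+\beta}\eps^2$, yielding the stated bound $\frac{4(1-\beta)}{1+\beta}\eps^2$.

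The main subtlety I would flag is that the constant $4$ in the second claim is not tight: it comes entirely from discarding the cross-correlation via $2\norm{\bm{a}}^2+2\norm{\bm{b}}^2$. A direct computation keeps the term $\E[\langle\Mn{(k+1)T},\Mn{kT}\rangle]$, and writing $\Mn{(k+1)T} = \beta^T\Mn{kT} + \bm{M}$ with $\bm{M}$ (the freshly injected noise on the interval $(kT,(k+1)T]$) independent of $\Mn{kT}$, this correlation equals $\beta^T\frac{1-\beta}{1+\beta}\eps^2$, giving the sharper limiting value $2(1-\beta^T)\frac{1-\beta}{1+\beta}\eps^2 \le 2\frac{1-\beta}{1+\beta}\eps^2$. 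Thus the genuine obstacle is not the algebra but the modeling choice: one must also verify that the deterministic signal part of $\Mg{t}$ contributes nothing to the ``noise variance'' (so the signal/noise split is legitimate) and decide whether to state the loose constant $4$ for simplicity or the tighter $2(1-\beta^T)$ that the correlation analysis actually delivers.
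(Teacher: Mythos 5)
Your proof is correct and follows essentially the same route as the paper: split off the noise component by linearity, unroll the momentum recursion into a geometric sum of independent zero-mean noise terms to obtain the $\frac{1-\beta}{1+\beta}\epsilon^2$ limit, then bound the noise in $\yv{k}$ via $\norm{\bm{a}-\bm{b}}^2 \leq 2\norm{\bm{a}}^2+2\norm{\bm{b}}^2$, which is exactly where the paper's factor of $4$ comes from (the paper leaves this last step implicit). The only differences are cosmetic or bonus: the paper initializes the noise recursion at $\Mn{0}=\bm{n}_0$ rather than $\bm{0}$ (immaterial as $t\rightarrow\infty$), and your observation that retaining the cross-correlation $E[\langle \Mn{(k+1)T},\Mn{kT}\rangle]$ sharpens the constant to $2(1-\beta^T)\frac{1-\beta}{1+\beta}\epsilon^2$ is a genuine refinement the paper does not make.
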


\begin{lemma}\label{lemma:equivalence}
Considering $\mathcal{L}(\btheta)$ above, with no noise involved, Eq (\ref{eq:skyk}) obtains the same Hessian approximation as the vanilla L-BFGS.
\end{lemma}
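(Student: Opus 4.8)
The plan is to exploit the fact that for the quadratic loss $\Loss$, the gradient map is affine, so that the exponential moving averages in Eq (\ref{eq:momentum}) commute with it. Differentiating the quadratic gives $\bm{g}(\btheta) = \bm{g}(\btheta_0) + B(\btheta - \btheta_0)$, an affine function of $\btheta$, and in the noise-free case $\bm{g}_t = \bm{g}(\btheta_t)$ exactly. The central identity I would establish is that the gradient momentum equals the gradient evaluated at the parameter momentum,
\begin{equation*}
\Mg{t} = \bm{g}(\Mp{t}) \qquad \text{for all } t.
\end{equation*}

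First I would prove this by induction on $t$. The inductive step rewrites $\Mg{t} = \beta\,\Mg{t-1} + (1-\beta)\bm{g}_t$ using the hypothesis $\Mg{t-1} = \bm{g}(\Mp{t-1})$, and then uses that an affine map preserves any combination whose weights sum to one: since $\beta + (1-\beta) = 1$, the constant part $\bm{g}(\btheta_0) - B\btheta_0$ is carried through unchanged and the linear part factors out, so $\beta\,\bm{g}(\Mp{t-1}) + (1-\beta)\bm{g}(\btheta_t) = \bm{g}\bigl(\beta\,\Mp{t-1} + (1-\beta)\btheta_t\bigr) = \bm{g}(\Mp{t})$. The base case collapses to the initialization condition $\Mg{0} = \bm{g}(\Mp{0})$, which holds under the natural consistent initialization $\Mp{0} = \btheta_0$, $\Mg{0} = \bm{g}_0$.

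Given the identity, the secant pair from Eq (\ref{eq:skyk}) satisfies
\begin{equation*}
\bm{y}_k = \Mg{(k+1)T} - \Mg{kT} = \bm{g}(\Mp{(k+1)T}) - \bm{g}(\Mp{kT}) = B\bigl(\Mp{(k+1)T} - \Mp{kT}\bigr) = B\,\bm{s}_k,
\end{equation*}
which is exactly the relationship $\bm{y}_k = B\,\bm{s}_k$ obeyed by the vanilla pair $\bm{s}_k = \btheta_k - \btheta_{k-1}$, $\bm{y}_k = \bm{g}_k - \bm{g}_{k-1}$ for a quadratic. Since the BFGS recursion in Eq (\ref{eq::BFGSupdate}) consumes the history only through the pairs $(\bm{s}_k, \bm{y}_k)$, and both constructions feed it pairs tied by the same true Hessian $B$, the momentum scheme enforces the identical secant constraint $\HI\,\bm{y}_k = \bm{s}_k$ against $B$ and therefore encodes the same curvature information as vanilla L-BFGS.

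I expect the main obstacle to be making the phrase ``same Hessian approximation'' precise: the two schemes sample different displacement directions $\bm{s}_k$, so the intermediate matrices $\HI_k$ are not literally equal; what is shared is the curvature relation $\bm{y}_k = B\,\bm{s}_k$, so both produce approximations that are unbiased and consistent with the true Hessian (and recover $B^{-1}$ along the sampled directions in the limit). A secondary subtlety is the base case: under inconsistent initialization (e.g.\ zero), the identity $\Mg{t} = \bm{g}(\Mp{t})$ holds only up to a residual $\beta^{t}\bigl(\Mg{0} - \bm{g}(\Mp{0})\bigr)$, so I would either assume consistent initialization or observe that this term decays geometrically and vanishes asymptotically.
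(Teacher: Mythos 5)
Your proposal is correct, and it reaches the same key fact as the paper --- that the momentum-based pairs satisfy the exact secant relation $\yv{k} = B\,\sv{k}$ with the true Hessian $B$ --- but by a genuinely cleaner route. The paper inducts directly on the secant-pair index $k$, expanding the momentum recursions by hand to show $\Mg{k}-\Mg{k-1} = B(\Mp{k}-\Mp{k-1})$, and then needs a separate telescoping argument to extend from $T=1$ to general $T$. You instead establish the single invariant $\Mg{t} = \bm{g}(\Mp{t})$ for all iterations $t$, which follows by induction from the fact that the affine gradient map of a quadratic commutes with any convex (affine) combination; the secant relation for arbitrary update period $T$ then drops out in one line, with no case split. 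Your invariant is also conceptually sharper: it explains \emph{why} momentum is harmless here (exponential averaging commutes with an affine gradient map), rather than just verifying the algebra. Two further points in your favor: the base case you require, $\Mp{0}=\btheta_0$, $\Mg{0}=\bm{g}_0$, is exactly the initialization in line 1 of the paper's Algorithm 2, so no extra assumption is needed; and your closing caveat is well taken --- the two schemes feed different displacement vectors into the BFGS recursion of Eq~(\ref{eq::BFGSupdate}), so the intermediate matrices $\HI_k$ need not coincide, and the honest content of the lemma (in both your proof and the paper's) is that the momentum pairs encode curvature exactly consistent with $B$, i.e.\ no accuracy is lost relative to vanilla L-BFGS. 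The paper glosses over this interpretive gap; you make it explicit.
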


\begin{theorem}\label{theorem:simpleconvex}
Considering $\mathcal{L}(\btheta)$ with $\lambda I \preceq B \preceq \Lambda I$ where $\Lambda \geq \lambda > 0$, and noise variance in $\yv{k}$ is bounded by $2\epsilon^2$. Assuming during the Hessian estimation, we always choose $\bm{s}_k, \bm{y}_k$ such that $\left \langle \bm{s}_k, \bm{y}_k \right \rangle \geq \alpha\cdot\epsilon \left \| \bm{s}_k\right \|$ with $\alpha > 2$, then $\mathcal{L}(\btheta_t) \geq \frac{(\alpha-2)^2\lambda}{2\Lambda^2}\epsilon^2$ at any iteration $t$.
\end{theorem}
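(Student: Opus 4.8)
The plan is to convert the statement into a lower bound on the \emph{distance to the optimizer} and then invoke strong convexity. Since the loss is quadratic I may translate coordinates so that its minimizer sits at the origin with minimum value $0$; then $\mathcal{L}(\btheta)=\tfrac12\btheta^\top B\btheta$, the exact gradient is $g(\btheta)=B\btheta$, and the secant vectors satisfy $\bm{y}_k = B\bm{s}_k + \bm{n}_k$, where $\bm{n}_k$ is the injected gradient noise with $E[\norm{\bm{n}_k}^2]\le 2\epsilon^2$. The eigenvalue bounds $\lambda I\preceq B\preceq\Lambda I$ supply the two inequalities I will use at the end: $\mathcal{L}(\btheta_t)\ge\tfrac{\lambda}{2}\norm{\btheta_t}^2$ (strong convexity) and $\norm{g(\btheta_t)}=\norm{B\btheta_t}\le\Lambda\norm{\btheta_t}$ (smoothness), which combine into $\mathcal{L}(\btheta_t)\ge\tfrac{\lambda}{2\Lambda^2}\norm{g(\btheta_t)}^2$. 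It therefore suffices to establish the gradient floor $\norm{g(\btheta_t)}\ge(\alpha-2)\epsilon$.

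To produce that floor I would exploit the enforced curvature condition. First, Cauchy--Schwarz applied to $\inprod{\bm{s}_k}{\bm{y}_k}\ge\alpha\epsilon\norm{\bm{s}_k}$ yields $\norm{\bm{y}_k}\ge\alpha\epsilon$, so an accepted gradient change can never be smaller than $\alpha\epsilon$. Next I peel off the noise: writing $B\bm{s}_k=\bm{y}_k-\bm{n}_k$ and applying the triangle inequality with the noise magnitude bound $\norm{\bm{n}_k}\le 2\epsilon$ (comfortably afforded in the root-mean-square sense by $E[\norm{\bm{n}_k}^2]\le 2\epsilon^2$), I obtain $\norm{B\bm{s}_k}\ge\norm{\bm{y}_k}-\norm{\bm{n}_k}\ge(\alpha-2)\epsilon$, where the hypothesis $\alpha>2$ is exactly what keeps the right-hand side positive. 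This says that the \emph{noiseless} part of every accepted gradient change has norm at least $(\alpha-2)\epsilon$.

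The final step, and the one I expect to be the real obstacle, is to transfer this per-step ``signal floor'' on $\norm{B\bm{s}_k}$ into the gradient floor $\norm{g(\btheta_t)}\ge(\alpha-2)\epsilon$ at the iterate itself. Because $B\bm{s}_k$ is the change of the exact gradient across a step, $B\bm{s}_k=g(\btheta_{(k+1)T})-g(\btheta_{kT})$, measuring the secant relative to the optimizer (where the exact gradient vanishes) identifies this change with $g(\btheta_t)=B\btheta_t$, so the signal floor passes directly to $\norm{g(\btheta_t)}$. Making this identification rigorous---arguing that once $\norm{g(\btheta_t)}$ drops below $(\alpha-2)\epsilon$ no admissible step can satisfy the curvature condition, so the iterate is effectively pinned above the noise floor at \emph{every} $t$---is the delicate part, and it is where the constant $2$ (hence the requirement $\alpha>2$) and the factor $\lambda/\Lambda^2$ must be reconciled. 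Substituting $\norm{g(\btheta_t)}\ge(\alpha-2)\epsilon$ into $\mathcal{L}(\btheta_t)\ge\tfrac{\lambda}{2\Lambda^2}\norm{g(\btheta_t)}^2$ then yields the claimed bound $\mathcal{L}(\btheta_t)\ge\tfrac{(\alpha-2)^2\lambda}{2\Lambda^2}\epsilon^2$.
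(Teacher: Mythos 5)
Your opening moves are sound and coincide with the paper's: the curvature condition plus the noise bound peel off $2\epsilon$ from $\alpha\epsilon$, leaving a ``signal'' bound carrying the factor $(\alpha-2)\epsilon$ (the paper phrases it as $\inprod{\sv{k}}{B\sv{k}}\geq(\alpha-2)\epsilon\norm{\sv{k}}$ via inner products; your norm version $\norm{B\sv{k}}\geq(\alpha-2)\epsilon$ is interchangeable here). But the step you yourself flag as ``the real obstacle'' is a genuine gap, and the bridge you sketch does not hold. The vector $B\sv{k}$ is the change of the exact gradient between two consecutive (momentum-averaged) iterates; it cannot be re-interpreted as the gradient at a single iterate by ``measuring the secant relative to the optimizer,'' because the algorithm never produces a secant pair connecting the optimizer to an iterate. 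More fundamentally, nothing in the hypotheses constrains the gradient at an individual iterate: the condition $\inprod{\sv{k}}{\yv{k}}\geq\alpha\epsilon\norm{\sv{k}}$ is imposed only on the pairs $(\sv{k},\yv{k})$, and an iterate can sit arbitrarily close to the optimum (gradient arbitrarily small) while every accepted pair still satisfies it --- for instance when the following step moves far away, making $\norm{\sv{k}}$ and $\norm{\yv{k}}$ large. So the per-iterate gradient floor $\norm{g(\btheta_t)}\geq(\alpha-2)\epsilon$ that your argument requires is not a consequence of the assumptions, and the proof cannot be completed along this route.

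The paper sidesteps this entirely by converting the signal bound into a floor on the \emph{step length} rather than on the gradient: combining $\inprod{\sv{k}}{B\sv{k}}\geq(\alpha-2)\epsilon\norm{\sv{k}}$ with smoothness $\inprod{\sv{k}}{B\sv{k}}\leq\Lambda\norm{\sv{k}}^2$ gives $\norm{\sv{k}}\geq(\alpha-2)\epsilon/\Lambda$, i.e.\ every accepted step is long. It then expands $\Loss$ by strong convexity around the \emph{next} iterate in the near-optimal regime ($\Loss(\btheta_{t+1})=0$, $g_{t+1}=\bm{0}$): $\Loss(\btheta_t)\geq \Loss(\btheta_{t+1})-\inprod{g_{t+1}}{\sv{k}}+\frac{\lambda}{2}\norm{\sv{k}}^2=\frac{\lambda}{2}\norm{\sv{k}}^2\geq\frac{(\alpha-2)^2\lambda}{2\Lambda^2}\epsilon^2$. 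In words: the iterates cannot reach the optimum because the last step to get there would have to be long, and strong convexity then pins the loss of the preceding iterate above the floor. Your own inequalities already yield the needed step-length floor (you have both $\norm{B\sv{k}}\geq(\alpha-2)\epsilon$ and $\norm{B\sv{k}}\leq\Lambda\norm{\sv{k}}$), so replacing your gradient-floor target with this step-length argument repairs the proof and recovers exactly the claimed constant.
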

\begin{remark}
$\left \langle \bm{s}_k, \bm{y}_k \right \rangle \geq \alpha\cdot\epsilon \left \| \bm{s}_k\right \|$ with $\alpha > 2$ ensures positive-definiteness in the Hessian approximation.
\end{remark}

\subsection{Block-wise Approximation: Improve Memory Efficiency in Distributed Training}
With the L-BFGS update rule, we need to store a sufficient number of history vectors $\vs_k, \vy_k$ to obtain a good Hessian approximation.  
Given model parameters $\btheta \in \mathbb{R}^d$ and supposing $M$ history vectors are needed, it required $O(2Md)$ memory space. 
While it is much smaller than storing the full Hessian ($O(d^2)$) with $M \ll d$, it can still be a critical bottleneck when training large models. 

Furthermore, when it comes to distributed training, simply adopting data parallelism (DP) unfortunately does not help relieve memory pressure on each node, as parameters and gradients are duplicated on each node in DP. 
A naive implementation of the L-BFGS update needs to maintain a copy of $\left \{ \vs_k, \vy_k \right \}_{k=1}^M$ on each node. As a result, more nodes do not help reduce per-node memory costs in the Hessian approximation. 
A more efficient design is needed to \emph{ reduce per-node memory costs when more nodes are available.}

To improve memory efficiency, we propose a new block-diagonal Hessian approximation. Unlike diagonal approximation methods such as KFAC \citep{2016_ICLR_distKFAC}, our block-wise approximation method allows each diagonal Hessian block to capture curvature information across multiple layers. 
Such a design enjoys a more flexible configuration depending on the size of each layer’s parameters and each node’s capacity. 
For instance, as shown in Figure \ref{fig:dist}, if node 1 and 2 have similar memory capacities, and the sizes of layers 1-2 and 3-4 are also close, we can approximate the Hessian blocks with layers 1-2 and 3-4 in node 1 and 2, respectively. Otherwise, we can group layers in a different way and ensure node 1 and 2 share similar memory costs. 

During training, for the Hessian approximation, the $i$-th node only collects parameters and gradients corresponding to its block and computes and stores vectors $\left \{ \vs_k^i, \vy_k^i \right \}$ based on Eq (\ref{eq:skyk}). 
When applying gradient conditioning, each node performs the Hessian-vector product on the corresponding layers, followed by updating parameters. And then, all nodes invoke a \emph{All-Gather} operation to send/collect updated parameters to/from other nodes 

\begin{figure}[!htb]
    \centering
    \includegraphics[width=.75\linewidth]{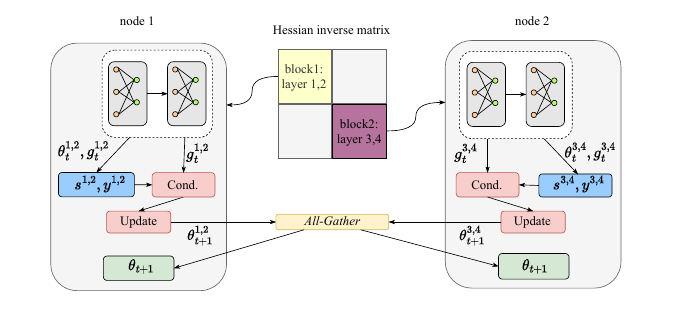}
    \caption{\footnotesize A distributed block-wise approximation. Each node can compute the Hessian inverse of a diagonal block comprising one or more layers. Unlike diagonal approximation methods such as KFAC, \method{} allows much more flexible configurations depending on the size of each layer’s parameters and each node’s capacity.}
    \label{fig:dist}
    \vspace{4mm}
\end{figure}

\textbf{Analysis - }
Given $p$ nodes and assuming a model’s parameters can be evenly distributed among these nodes, then each node only needs $O(\frac{2M}{p}d)$ memory to store the history vectors. If more nodes are available, per-node memory costs can be further reduced. 
On the other hand, it is easy to show that the Hessian inverse comprising block-diagonal matrices is positive-definite if each of these blocks is positive-definite, therefore ensuring stable convergence with positive-definite diagonal Hessian blocks.
\begin{lemma}\label{lemma:boundsubHessian}
In $k$-th Hessian update, if block $i (i=1,\cdots,p)$, ${\HI_k}^i$ is bounded by $\xi^i I \preceq {\HI_k}^i \preceq \Xi^i I$, then $\min\xi^i \preceq \HI_k \preceq \max\Xi^i$.
\end{lemma}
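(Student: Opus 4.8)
The plan is to reduce the Loewner (operator) bound on the full matrix $\HI_k$ to a statement about quadratic forms, exploiting that the block-wise scheme makes $\HI_k$ a direct sum $\mathrm{diag}({\HI_k}^1,\dots,{\HI_k}^p)$ whose off-diagonal coupling vanishes. First I would fix an arbitrary $\vv\in\R^d$ and partition it conformally with the block structure as $\vv=(\vv^1,\dots,\vv^p)$, where $\vv^i$ gathers the coordinates assigned to block $i$. Since the off-diagonal blocks are zero, the quadratic form separates exactly,
\[
    \vv^\top \HI_k \vv = \sum_{i=1}^p (\vv^i)^\top {\HI_k}^i \vv^i ,
\]
and this separation is the single place where the hypothesis that the approximation is block-diagonal is actually used.

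Next I would feed in the per-block bound. Each ${\HI_k}^i$ is symmetric, so $\xi^i I \preceq {\HI_k}^i \preceq \Xi^i I$ is equivalent to $\xi^i\norm{\vv^i}^2 \leq (\vv^i)^\top {\HI_k}^i \vv^i \leq \Xi^i\norm{\vv^i}^2$. Summing over $i$ and then pulling out the extremal coefficients gives $\vv^\top \HI_k \vv \geq (\min_i \xi^i)\sum_i\norm{\vv^i}^2$ and $\vv^\top \HI_k \vv \leq (\max_i \Xi^i)\sum_i\norm{\vv^i}^2$. Because the coordinate partition is orthogonal, $\sum_i\norm{\vv^i}^2 = \norm{\vv}^2$, so both bounds are multiples of $\norm{\vv}^2$; as $\vv$ was arbitrary this is precisely $(\min_i\xi^i)I \preceq \HI_k \preceq (\max_i\Xi^i)I$, the claim.

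I do not anticipate a genuine obstacle, as the argument is elementary; the only items requiring a word of care are recording why the cross terms vanish (block-diagonality) and noting the symmetry of each block so that the Loewner and quadratic-form bounds coincide. As a cross-check I would mention the spectral route: the spectrum of a block-diagonal matrix is the union of the spectra of its blocks, whence $\lambda_{\min}(\HI_k) = \min_i \lambda_{\min}({\HI_k}^i) \geq \min_i \xi^i$ and $\lambda_{\max}(\HI_k) = \max_i \lambda_{\max}({\HI_k}^i) \leq \max_i \Xi^i$, giving the same conclusion. I would present the quadratic-form version as the main proof since it is fully self-contained and needs no appeal to the factorization of the characteristic polynomial.
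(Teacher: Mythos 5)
Your proof is correct and follows essentially the same route as the paper's: partition an arbitrary vector conformally with the blocks, split the quadratic form into a sum of per-block quadratic forms, and bound each term by the extremal $\xi^i$ and $\Xi^i$. If anything, your write-up is slightly more careful than the paper's, since you retain the $\norm{\vv}^2$ normalization needed to state the conclusion as a Loewner bound (the paper's final inequality omits it).
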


\subsection{Put All Together}
With the momentum-based design and block-wise approximation, we present the whole algorithm, \method{}, as shown in Alg \ref{alg:method}.
For each parameter block $i$ at iteration $t$, we compute momentum of parameters and gradients, and store them in $\Mp{t}$ and $\Mg{t}$ (line 6).
For every $T$ iterations, we compute $\vs_k^i, \vy_i$, and update the Hessian approximation (line 13-18). 
It is worth noting that in \emph{UpdateHessian}, the Hessian inverse is not explicitly computed. Instead, we only push $\vs_k^i, \vy_k^i$ to the history vector buffer. 
If the buffers are full, we will pop the oldest vectors. 
In the first $2T$ iterations, we use SGD to conduct a warmup training as the Hessian inverse is not available yet (line 8). 
After the initial warmup training, we will pre-condition gradients $\vg_t$ before applying updates to $\btheta$ (line 10-11).
At the end of each iteration, a \emph{All-Gather} is called to update parameters on all nodes (line 20).

\begin{algorithm}[!htb]
\caption{\method{} algorithm ($T$, $M$, parameter block $\left \{ \btheta^i \right \}_{i=1}^p$)}
\label{alg:method}
\begin{algorithmic}[1]
\STATE Initialize $\btheta_0$, $\Mp{0}^i=\btheta_0^i$, $\Mg{0}^i=\bm{g}_0^i$ \\
|
\FOR{$t = 1, \cdots, \text{max}\_\text{iter}$}
    \STATE Randomly choose mini-batch input $\tX_t \in \mathcal{X}$
    \STATE Perform model forward and backward, compute gradients $\bm{g}_t$ given $\tX_t$
    \FOR{each parameter block $i$}
    \STATE $\Mp{t}^i = \beta\cdot\Mp{t-1}^i + (1-\beta)\cdot\btheta_t^i, \quad \Mg{t}^i = \beta\cdot\Mg{t-1}^i + (1-\beta)\cdot\bm{g}_t^i$ \\
    |
    \IF{$t\leq 2T$}
        \STATE {\color{blue}Warmup with SGD}: $\btheta_{t+1}^i = \btheta_t^i - \eta_t\cdot\bm{g}_t^i$
    \ELSE
        \STATE {\color{blue}Pre-condition}: $\Delta\btheta_{t}^i = \HI_k^i\cdot\bm{g}_t^i$ {\color{blue}\COMMENT{Algorithm~\ref{alg:HessianVecProd}}}
        \STATE $\btheta_{t+1}^i = \btheta_t^i - \eta_t\cdot\Delta\btheta_{t}^i$ 
    \ENDIF \\
    |
    \IF{$t \% T == 0$ and $t>T$}
        \STATE $k = k+1$
        \STATE $\bm{s}_k^i = \Mp{t}^i - \Mp{t - T}^i, \quad \bm{y}_k^i = \Mg{t}^i - \Mg{t - T}^i$
        \STATE {\color{blue}Apply damping}: $\hat{\bm{y}}_k^i=\tau\cdot\bm{y}_k^i + (1-\tau)\cdot\bm{s}_k^i$
        \STATE {\color{blue}Update Hessian}: $\HI_{k}^i = \emph{UpdateHessian}(\HI_{k-1}^i, \bm{s}_k^i, \hat{\bm{y}}_k^i, M)$
    \ENDIF \\
    |
    \ENDFOR \\
    |
    \STATE \textit{All-Gather} $\left \{ \btheta^i_{t+1} \right \}_{i=1}^p$ across all nodes.
\ENDFOR
\end{algorithmic}
\end{algorithm}

\textbf{Hessian damping - }
For non-convex QN optimization, damping is a common technique that ensures the positive definiteness of the Hessian \citep{dampedBFGS,improvedDamping, 2015_ICML_KFAC}. 
Furthermore, even with momentum, stochastic training can still cause undesirable fluctuation in the Hessian approximation. 
To preserve the positive of the Hessian, we adopt an adaptive damping scheme as

\begin{equation}\label{eq::ydamping}
    \hat{\vy}_i = \tau\cdot\vy_i + (1-\tau)\cdot\vs_i,
\end{equation}
with $\tau$ obtained as
\begin{equation*}\label{eq::dampfactor}
    \tau=\left\{\begin{matrix}
            \min(\frac{1-\sigma_L}{1-\mu}, \tau_0) &\mu\leq\sigma_L<1 \\ 
            \min(\frac{\sigma_H-1}{\mu-1}, \tau_0) &\mu\geq\sigma_H>1\\ 
            \tau_0 & \text{otherwise}
        \end{matrix}\right.
\end{equation*}
where $\mu=\frac{\bm{s}_i^T\cdot\bm{y}_i}{\bm{s}_i^T\cdot\bm{s}_i}$, $\sigma_L$ and $\sigma_H$ are the lower and upper thresholds for restraining eigenvalues in $\HI$ and $0<\tau_0<1$ is a constant coefficient. 

It is easy to show that $\frac{\vs_i^T\cdot\hat{\vy}_i}{\vs_i^T\cdot\vs_i}$ is bounded between $[\sigma_L, \sigma_H]$, so that the positive definiteness of the Hessian approximation is preserved during training (See \ref{subsec:prooflemma:damping} for the proof). 

\section{Theoretical Guarantees}\label{sec:theorem}
In this section, we first prove that \method{} achieves a linear convergence rate under non-convex settings with proper assumptions.
Then, in the second part of this section, we delve into the compute and memory costs in \method{}, and show its benefits in wall-clock convergence compared to other baseline methods: stochastic L-BFGS, and KFAC.

\subsection{Convergence Analysis}
We assume the risk function $\Loss$ satisfies the following conditions:
\begin{assumption}\label{as:diff}
$\Loss(\btheta)$ is twice continuously differentiable.
\end{assumption}

\begin{assumption}\label{as:smooth}
$\ell_i(\btheta)$ is $\Lambda$-smooth for $1\leq i \leq N$, $\Lambda > 0$: 
$\forall \btheta_1, \btheta_2$, $\norm{\nabla\ell_i(\btheta_2)-\nabla\ell_i(\btheta_1)}\leq \Lambda\norm{\btheta_2-\btheta_1}$.
\end{assumption}

\begin{assumption}\label{as:pl}
$\Loss(\btheta)$ is $\lambda$-PL: it satisfies Polyak-Lojasiewicz (PL) condition for a constant $\lambda > 0$: $\norm{\grad{L}(\btheta)}^2 \geq \lambda\Loss(\btheta)$.
\end{assumption}

The smooth condition in AS \ref{as:smooth} is commonly used in analyzing convergence in practical optimization. 
In addition, noting that compared to typical strong convexity assumptions, the PL condition in AS \ref{as:pl} applies to a more general setting \citep{PL}.
Strong convexity implies the PL condition, but not vice versa. In AS \ref{as:pl}, we relax the constraint and only require the gradient variance to be lower bounded. 

With the assumptions, we present the convergence theorem as follows. Proofs are deferred to Appendix \ref{appx:proof}.
\begin{theorem}\label{theorem:convergence}
Assume AS \ref{as:diff}-\ref{as:pl} hold at each iteration $t$ of \method{} with mini-batch input $\tX_t$ where each sample is randomly sampled from  $\mathcal{X}$ with replacement, then the expectation of $\Loss(\btheta_{t})$ satisfies
\begin{center}
    $E_{\tX_{t}}[\Loss(\btheta_{t})] \leq \alpha_{t-1} E_{\tX_{t-1}}[\Loss(\btheta_{t-1})]$,
\end{center}
where $\alpha_{t-1} = 1-\eta_{t-1}\lambda\xi + \eta_{t-1}^2\Lambda^2\Xi^2$. $\xi$ and $\Xi$ denotes the lower and upper bound of the $\HI$.
\end{theorem}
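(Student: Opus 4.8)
The plan is to combine the smoothness-based descent inequality with the uniform spectral bounds on the preconditioner and the PL condition, in the standard style of a preconditioned stochastic-gradient analysis. First I would invoke AS~\ref{as:smooth}: since each $\ell_i$ is $\Lambda$-smooth, the average $\Loss$ is $\Lambda$-smooth as well, so the quadratic upper bound (descent lemma) gives
\[
\Loss(\btheta_t) \le \Loss(\btheta_{t-1}) + \inprod{\grad{L}(\btheta_{t-1})}{\btheta_t - \btheta_{t-1}} + \tfrac{\Lambda}{2}\norm{\btheta_t - \btheta_{t-1}}^2 .
\]
Substituting the \method{} update $\btheta_t - \btheta_{t-1} = -\eta_{t-1}\HI_k\bm{g}_{t-1}$ turns the right-hand side into a linear term $-\eta_{t-1}\inprod{\grad{L}(\btheta_{t-1})}{\HI_k\bm{g}_{t-1}}$ and a quadratic term $\tfrac{\Lambda\eta_{t-1}^2}{2}\norm{\HI_k\bm{g}_{t-1}}^2$.

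Next I would take the conditional expectation over the fresh mini-batch $\tX_{t-1}$, treating $\btheta_{t-1}$ and the stored history vectors (hence $\HI_k$) as fixed. Because samples are drawn with replacement, $\bm{g}_{t-1}$ is an unbiased estimate of $\grad{L}(\btheta_{t-1})$, so the cross term becomes $-\eta_{t-1}\inprod{\grad{L}(\btheta_{t-1})}{\HI_k\grad{L}(\btheta_{t-1})}$; using symmetry of $\HI_k$ together with the lower bound $\xi I \preceq \HI_k$ this is at most $-\eta_{t-1}\xi\norm{\grad{L}(\btheta_{t-1})}^2$. For the quadratic term I would apply the upper bound $\HI_k \preceq \Xi I$ to get $\norm{\HI_k\bm{g}_{t-1}}^2 \le \Xi^2\norm{\bm{g}_{t-1}}^2$, and then control the gradient second moment through individual smoothness and nonnegativity, $\E_{\tX_{t-1}}[\norm{\bm{g}_{t-1}}^2] \le 2\Lambda\,\Loss(\btheta_{t-1})$ (valid for any mini-batch drawn with replacement by Jensen's inequality). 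Collecting terms yields $\E_{\tX_{t-1}}[\Loss(\btheta_t)] \le \Loss(\btheta_{t-1}) - \eta_{t-1}\xi\norm{\grad{L}(\btheta_{t-1})}^2 + \eta_{t-1}^2\Lambda^2\Xi^2\Loss(\btheta_{t-1})$. Finally I would apply the PL condition in AS~\ref{as:pl}, $\norm{\grad{L}(\btheta_{t-1})}^2 \ge \lambda\Loss(\btheta_{t-1})$, to the negative term, producing the contraction factor $\alpha_{t-1} = 1 - \eta_{t-1}\lambda\xi + \eta_{t-1}^2\Lambda^2\Xi^2$, and then take the outer expectation over the history by the tower property to reach the stated recursion.

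The spectral bounds $\xi I \preceq \HI_k \preceq \Xi I$ are supplied by Lemma~\ref{lemma:boundsubHessian} together with the damping scheme that guarantees positive definiteness, so the technical core reduces to justifying them uniformly across iterations. I expect the main obstacle to be the measurability bookkeeping: one must argue that $\HI_k$ is determined by the filtration generated by past mini-batches and is independent of the current $\tX_{t-1}$, which is precisely what legitimizes pulling $\HI_k$ outside the conditional expectation and invoking unbiasedness of $\bm{g}_{t-1}$. A secondary point worth stating explicitly is that both the gradient second-moment bound $\E[\norm{\bm{g}}^2] \le 2\Lambda\Loss$ and the PL inequality as written presume the minimum risk is normalized to zero; with $\Loss^{\star}=0$ the two bounds are consistent and the argument closes cleanly, whereas a nonzero optimum would introduce an additive noise floor and convert the clean linear rate into convergence to a neighborhood.
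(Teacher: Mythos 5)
Your proposal follows essentially the same route as the paper's own proof: the $\Lambda$-smoothness descent lemma, substitution of the preconditioned update $\btheta_t = \btheta_{t-1} - \eta_{t-1}\HI_k\bm{g}_{t-1}$, conditional expectation exploiting independence of $\HI_k$ from the fresh mini-batch, the spectral bounds $\xi I \preceq \HI_k \preceq \Xi I$, the second-moment bound $\E[\norm{\bm{g}_{t-1}}^2] \le 2\Lambda\Loss(\btheta_{t-1})$, and finally the PL condition, yielding exactly $\alpha_{t-1} = 1-\eta_{t-1}\lambda\xi + \eta_{t-1}^2\Lambda^2\Xi^2$. The only cosmetic differences are that the paper establishes the second-moment bound by expanding the mini-batch sum and using sample independence (its gradient-variance lemma) rather than your Jensen argument, and it derives the spectral bounds from the damping scheme in a dedicated appendix lemma rather than from the block-wise lemma you cite; your closing remark about the implicit $\Loss^{\star}=0$ normalization is a point the paper leaves unstated.
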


By choosing $\eta_{t-1}$ such that $\alpha_{t-1} < 1$, \method{} converges at a linear rate. 
The convergence rate matches the best rate in stochastic QN optimizations. It is worth mentioning that no convergence rate beyond linear is observed in stochastic L-BFGS optimizations.
Theorem~\ref{theorem:convergence} is not aimed to push the theoretical convergence limit. Instead, it investigates the effects of momentum in the Hessian and block-diagonal approximation.
In addition, as mentioned earlier, Theorem~\ref{theorem:convergence} also applies to convex settings, as strong convexity implies $\norm{\grad{L}(\btheta)}^2$ is lower bounded by $\Loss(\btheta)$ for an appropriate $\lambda > 0$ and hence AS \ref{as:pl} holds.

\subsection{Complexity Analysis}
In this section, we analyse the compute and memory cost of SGD, KFAC\citep{2016_ICLR_distKFAC}, stochastic L-BFGS (we call it sL-BFGS)\citep{sLBFGS} and \method{}.
As the main motivation, reducing the complexities of QN methods is crucial for their deployment in real large-scale neural network optimization.

Given a model with parameter $\btheta \in \mathbb{R}^d$, we use $\Cfb$ and $\Mfb$ to represent the compute and memory cost of a forward/backward (Fwd/Bwd) pass with a batch size of $b=1$. Furthermore,  $\Copt$ denotes the compute cost of model updates (Opt) which consists of gradient reduction, computing and apply the update $\Delta\btheta$.

Table~\ref{tab::complexity} summarizes the total compute and memory cost of SGD, KFAC, sL-BFGS and \method{} in a general distributed system with $p$ workers. 
Compared to SGD, during the forward and backward passes, \method{} needs to additionally compute $\Mp{}$, $\Mg{}$, for which the complexity increases linearly with model size ($d$). 
The main extra compute \method{} introduces is the Hessian-vector product, in which we need to iterate over $\left \{ \bm{s}_i \right \}_{i=1}^M$ and $\left \{ \bm{y}_i \right \}_{i=1}^M$, as shown in  Alg~\ref{appx:hessianvec}. The complexity increases linearly with the number of history vectors and model size ($2Md$). However, it only adds average cost of $O(\frac{2Md}{p})$ on each worker.
Compared to $O(b\Cfb)$ complexity in forward and backward passes, such costs are relatively marginal.

As a comparison, KFAC adds significant additional computations through 1) possible multiple backward passes to update factors ($\gamma b\Cfb$ with $\gamma\geq 1$), 2) matrix inversion ($\sum(l_i^3+(\frac{d_i}{l_i})^3)$) for every $T$ iterations, and 3) Matrix-vector products ($2\sum(l_i+\frac{d_i}{l_i})d_i$). 
On the other hand, sL-BFGS also resorts to computation-intensive operations including full-batch gradients and a separate large batch to estimate the Hessian, which respectively adds amortized costs of $O(b\Cfb)$ and $\frac{1}{T}b_H\Cfb$. With data parallelism, it requires each worker to locally perform gradient conditioning, which adds another cost of $O(2Md)$ in total. 

As for memory usage, \method{} mainly needs $O(2Md)$ to store history vectors. The amortized costs of each worker are $O(\frac{2Md}{p})$.
In practice, $M$ is set to be $10\sim 20$, which ensures that memory usage is manageable in \method{}.
sL-BFGS needs $O(Md)$ storage for $\sv{i},\yv{i}$ in total, and amortized cost of $O(\frac{1}{T}b_H\Mfb)$ for additional backward passes.
While KFAC needs $O(2\sum (l_i^2+(\frac{d_i}{l_i})^2))$ to store sub-matrices and their inverse, where the actual memory footprint hinges on model architectures. 
\begin{table*}[!htb]
\vspace{-2mm}
\caption{\footnotesize Computations and Memory in SGD, KFAC, sL-BFGS and \method{}}
\label{tab::complexity}
\centering
\small
\begin{tabular}{@{}ccccc@{}}
\toprule
 & SGD & KFAC & sL-BFGS & \method \\ \midrule
 \multicolumn{5}{c}{Per-Node Computation} \\ \midrule
 \multicolumn{1}{c|}{Fwd\&Bwd}& $O(b\Cfb)$ &$O(d+\gamma b\Cfb+\frac{1}{T}\sum(l_i^3+(\frac{d_i}{l_i})^3))$ & $O(d+2b\Cfb+\frac{1}{T}b_H\Cfb)$ & $O(\frac{d}{p}+b\Cfb)$ \\
 \multicolumn{1}{c|}{Opt}& $O(d)$ & $O(d+2\sum(l_i+\frac{d_i}{l_i})d_i)$ & $O(d+2Md)$ & $O(d+\frac{2Md}{p})$ \\ \midrule
 \multicolumn{5}{c}{Per-Node Memory} \\ \midrule
 \multicolumn{1}{c|}{Fwd\&Bwd}& \multicolumn{1}{c}{$O(b\Mfb)$} & \multicolumn{1}{c}{$O(d+b\Mfb)$} & $O(d + b\Mfb + \frac{1}{T}b_H\Mfb)$ & $O(\frac{d}{p}+b\Mfb)$\\
 \multicolumn{1}{c|}{Opt}& \multicolumn{1}{c}{$O(d)$} & \multicolumn{1}{c}{$O(d+2\sum(l_i^2+(\frac{d_i}{l_i})^2))$} & $O(d+2Md)$ & $O(d+\frac{2Md}{p})$\\ \bottomrule
\end{tabular}
\small
\begin{itemize}
    \item $b$: per-node batch size. $b_H$: batch size for the Hessian approx. $p$: \#workers. $T$: Hessian update period.
    \item  $d_i$: \#params in  $i$-th layer. $l_i$: \#input neurons in $i$-th layer. $M$: max \#history vectors.
\end{itemize}
\vspace{-0.4cm}
\end{table*}

Table \ref{tab:resnet_costs} lists detailed amortized costs of different optimizers on ResNet-50/ImageNet in a distributed system. Compared to KFAC and sL-BFGS, \method{} significantly reduces compute costs in approximating the Hessian and gradient conditioning. Due to the efficient distributed design, memory consumption on each worker is also reduced compared to sL-BFGS.
\begin{table}[!htb]
\vspace{-2mm}
\caption{\footnotesize Computation (MACs) and memory costs of different optimizers on ResNet-50/ImageNet ($b=64$, $T=20$; $M=10$; $b_H=1024$, $\gamma=1$, $p=8$). ``B" denotes billion, and ``M" denotes million.}
\label{tab:resnet_costs}
\centering
\begin{tabular}{c|cccc}
\toprule
 & SGD & KFAC & sL-BFGS & \method{}  \\
\midrule
Fwd/Bwd & \multicolumn{4}{c}{769B} \\
\midrule
$\HI$ Compute & -  & 570B & 1414B & 52M\\
Opt & 26M & 156B & 4B & 546M \\
$\HI$ Memory & - & 308M & 4B & 520M \\
\bottomrule
\end{tabular}
\vspace{-.4cm}
\end{table}

\section{Experiments}\label{sec:exp}
We conduct various experiments on computer vision (CV) problems involving datasets such as CIFAR-10, CIFAR-100 and ImageNet. We choose SGD and Adam as the main baselines since it is widely used in these tasks and maintains the best training performance. 
We also compare with another quasi-Newton method, KFAC, in large-scale model training. 
We separately tune hyperparameters for each optimizer to ensure it achieves the best validation accuracy.

We use a single GPU server with 8 Nvidia Quadro RTX 5000 GPUs to simulate a distributed system, where each GPU is used as a worker to perform forward and backward passes, and model updates. Furthermore, each worker is also assigned with one Hessian block to compute the Hessian inverse and gradient conditioning. The current implementation is based on PyTorch. We set lower and upper thresholds of damping $\sigma_L, \sigma_H$ to be $0.01, 1.5$ in all experiments to smooth the Hessian approximation.

\subsection{Experiments on CIFAR-10/CIFAR-100}
We first evaluate \method{} on two small-scale problems: CIFAR-10 and CIFAR-100, and demonstrate the convergence advantage of \method{} compared to SGD and Adam. 
The models used are ResNet-18 and DeiT-Tiny \cite{DeiT}, where DeiT-Tiny is an efficient image transformer with 12 layers, 3 attention heads, and hidden and MLP dimension of 192.

For ResNet18, we divide it into 4 blocks such that each block consists of 2 \emph{resblocks} (\cite{2016_CVPR_ResNet}). The linear layer for classification is packed into the last block.  For DeiT-Tiny, due to the small model size, we choose to approximate the whole Hessian. 

Hyperparameters are tuned to achieve the best validation accuracy. Details are provided in Appendix \ref{appx:hparam:cifar}.

Figure \ref{fig:cifar} shows training loss. We observe that \method{} achieves a much faster convergence rate compared to SGD and ADAM. 
Table \ref{tab:acc:cifar} lists the validation accuracy on CIFAR-10 and CIFAR-100. 
We note that \method{} also achieves similar accuracy as SGD. On the other hand, as in Figure  \ref{fig:cifar}, although ADAM has a convergence rate close to \method{}, the validation accuracy are much lower in all experiments compared to \method{}.
Therefore, we obverse \method{} not only deliver faster convergence, but also achieves good generalization performance.
\begin{figure}[!htb]
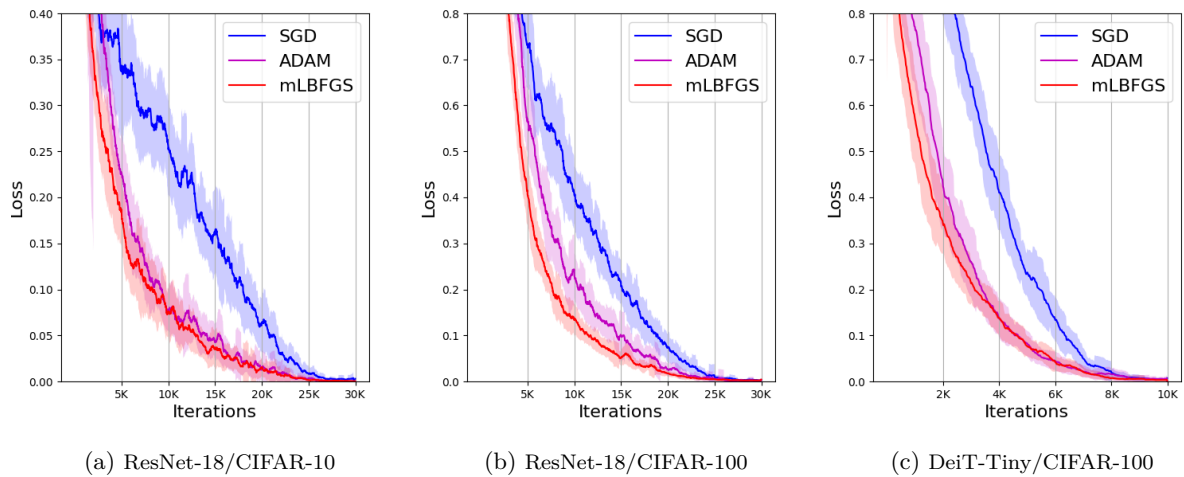

    \centering
    \vspace{-4mm}
    \begin{subfigure}{.32\textwidth}
        \centering
        \includegraphics[width=\linewidth]{plot/cifar10_resnet.pdf}
        \caption{\footnotesize ResNet-18/CIFAR-10}
    \end{subfigure}
    \begin{subfigure}{.32\textwidth}
        \centering
        \includegraphics[width=\linewidth]{plot/cifar100_resnet.pdf}
        \caption{\footnotesize ResNet-18/CIFAR-100}
    \end{subfigure}
    \begin{subfigure}{.32\textwidth}
        \centering
        \includegraphics[width=\linewidth]{plot/cifar100_deit.pdf}
        \caption{\footnotesize DeiT-Tiny/CIFAR-100}
    \end{subfigure}
    \caption{\footnotesize Training loss \method{}, SGD, ADAM on ResNet-18 and Deit-Tiny on CIFAR-10/100. \method{} delivers faster convergence than SGD and ADAM.}
    \label{fig:cifar}
    \vspace{-.4cm}
\end{figure}

\begin{table}[!htb]
\caption{\footnotesize Validation accuracy of ResNet-18 and Deit-Tiny on CIFAR-10/100 using SGD, ADAM, and \method{}. }
\label{tab:acc:cifar}
\centering
\small
\begin{tabular}{ccc|ccc|ccc}
\toprule
\multicolumn{3}{c|}{ResNet-18/CIFAR-10} & \multicolumn{3}{c|}{ResNet-18/CIFAR-100} & \multicolumn{3}{c}{DeiT-Tiny/CIFAR-100}\\
\midrule
 SGD & ADAM & \method{} & SGD & ADAM & \method{} & SGD & ADAM & \method{} \\
 $94.1\pm 0.1$ & $92.7\pm 0.1$ & $93.9\pm 0.1$ & $75\pm 0.15$ & $72.2\pm 0.16$ & $74.4 \pm 0.1$ & $80.5\pm 0.2$ & $75.3\pm 0.3$ & $79.9 \pm 0.2$ \\
\bottomrule
\end{tabular}
\vspace{-.4cm}
\end{table}

\subsection{Experiments on ImageNet}
ImageNet has been the gold standard for evaluating the performance of optimizers. It consists of $\sim$1.2M training and $\sim$50K test images, categorized into 1000 classes. 
We follow the standard data pre-processing procedure, where each image is first resized to $256\times 256$, and randomly cropped to $224\times 224$ and flipped horizontally. Each image is then normalized using pre-computed mean and variance.

\textbf{ResNet-50} --
When approximating the Hessian, we divide ResNet-50 into 8 blocks such that each block consists of 2 \emph{resblocks}. Similar to ResNet-18 in CIFAR-10, the linear layer is packed into the last block.
Figure~\ref{fig:resnet_imagenet} shows iteration-wise convergence on ResNet-50 using SGD, Adam, KFAC and \method{}. 
Detailed hyperparameter settings are provided in Appendix \ref{appx:hparam:imagenet}. Compared to Adam and SGD, \method{} enjoys much faster per-iteration convergence. Such fast convergence is also reflected in the validation dataset (Figure \ref{fig:resnet_imagenet_val}). Furthermore, it also generalizes well on the validation set, and finally reaches comparable validation accuracy to SGD.
\begin{figure}[!htb]
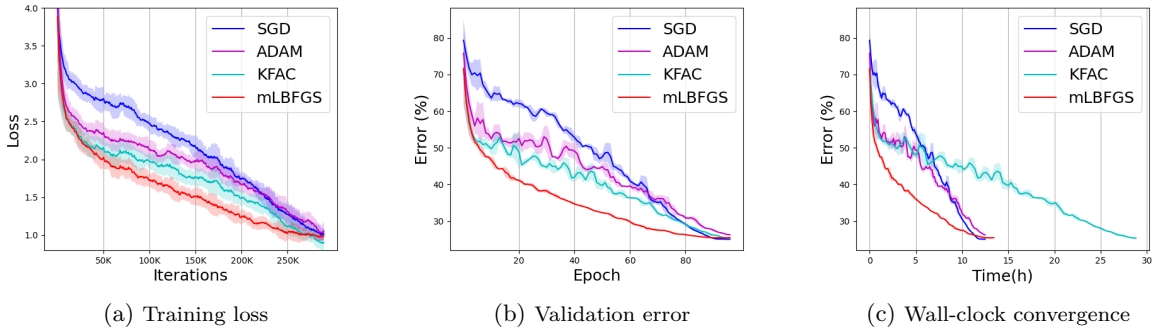

\centering
\begin{subfigure}{.32\textwidth}
    \centering
    \includegraphics[width=.95\linewidth]{plot/imagenet_resnet.pdf}
    \caption{\footnotesize Training loss}
    \label{fig:resnet_imagenet}
\end{subfigure}
\begin{subfigure}{.32\textwidth}
    \centering
    \includegraphics[width=.95\linewidth]{plot/imagenet_resnet_val.pdf}
    \caption{\footnotesize Validation error}
    \label{fig:resnet_imagenet_val}
\end{subfigure}
\begin{subfigure}{.32\textwidth}
    \centering
    \includegraphics[width=.95\linewidth]{plot/imagenet_resnet_val_time.pdf}
    \caption{\footnotesize Wall-clock convergence}
    \label{fig:resnet_imagenet_time}
\end{subfigure}
\caption{\footnotesize Training loss and validation error of \method{}, SGD, Adam and KFAC on ImageNet using ResNet-50. \method{} delivers faster iteration-wise and real-time convergence compared to SGD and Adam. We  plot the mean and standard error over 3 runs with different random seeds}
\label{fig:resnet_vit_imagenet}
\vspace{-.4cm}
\end{figure}

\begin{table}[!htb]
\caption{\footnotesize Validation accuracy of ResNet-50 on ImageNet using SGD, ADAM, KFAC, and \method{}. }
\label{tab:runtime}
\centering
\small
\begin{tabular}{cc|cc|cc|cc}
\toprule
\multicolumn{2}{c|}{SGD} & \multicolumn{2}{c|}{ADAM} & \multicolumn{2}{c|}{\method{}} & \multicolumn{2}{c}{KFAC}\\
\midrule
 Acc & Time/epoch & Acc & Time/epoch & Acc & Time/epoch & Acc & Time/epoch  \\
 $74.9\pm 0.11$ & 7.8min & $73.95\pm 0.1$ & 7.8min & $74.6\pm 0.13$ & 7.9min & $74.5\pm 0.1$ & 17min \\
\bottomrule
\end{tabular}
\end{table}

The benefit of \method{} is even more striking in terms of wall-clock time. 
As listed in Figure \ref{fig:resnet_imagenet_time} and Table \ref{tab:runtime}, due to light compute costs, the per-epoch runtime of \method{} is almost the same as SGD and Adam. 
On the other hand, for KFAC, while it delivers fast per-iteration convergence compared to SGD and ADAM, the wall-clock performance is significantly diminished by its additional compute costs. The per-epoch runtime is $>2\times$ more than \method{}.

\subsection{Ablation Study: The Effects of Momentum and Damping}\label{subsec:ablation}
In this section, we give more insight into the effects of momentum and damping used in \method{}. To this end, we ablate two critical components in \method{}: momentum and damping in the Hessian approximation, and then use the ablated version to train ResNet-18 on CIFAR-10. We focus on CIFAR-10 since we observed more convergence instability on this dataset compared to others. 

Figure~\ref{fig:ablation} shows convergence using the ablated \method{} with only momentum (black), with only damping (purple), and with no momentum or damping (red). Due to stochastic noise, the ablated version of \method{} without momentum/damping (vanilla L-BFGS) diverges easily in the early stages. 
With momentum (black), the whole optimization is significantly stabilized. However, it still fails to converge when there is a radical change in the loss landscape (for example, when learning rate decays). 
With damping (purple), the Hessian approximation is effectively restrained, especially when such sudden changes in the loss landscape happen. 
It is interesting to observe that while damping prevents divergence, the whole training is still largely affected by stochastic noise. Notable fluctuation in the loss is commonly observed during training. As a comparison, the complete \method{} (blue) effectively addresses these issues achieving much more stable convergence.
\begin{figure}[!htb]
    \centering
    \includegraphics[width=.6\linewidth]{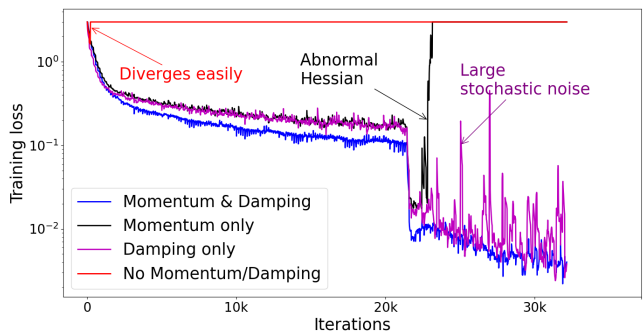}
    \caption{\footnotesize Ablation study for \method{} on ResNet-18/CIFAR-10 (batch size: 256). Vanilla L-BFGS (red) diverges easily. Damping-only scheme (purple) cannot effectively suppress stochastic noise. Momentum-only scheme (black) still diverges with radical changes in the Hessian. While damping and momentum combined (blue) achieve a very smooth and stable optimization.}
    \label{fig:ablation}
    \vspace{-.4cm}
\end{figure}

\section{Related Works}\label{sec:related}
While SGD is widely used in many machine learning tasks, other forms of optimization have also been investigated extensively in the past years. Among these attempts, designing optimizers with preconditioned gradients is one of the most promising areas.

Adaptive methods such as Adam, AdaGrad, AdaDelta \citep{2014_arXiv_Adam, 2011_JMLR_AdaGrad, AdaDelta} construct a diagonal matrix by incorporating knowledge from the past gradients. Such a diagonal matrix adaptively adjusts the learning rate for each parameter.
For instance, AdaGrad uses a large learning rate for those irrelevant features (small gradients), and small learning for those relevant ones (large gradients).  Adams further uses the first and second moment of gradients to adjust the learning rate. 

Besides diagonal preconditioning matrices, constructing block diagonal matrices or even full matrices has received increasing attention in recent years. Methods such as Shampoo \citep{Shampoo} approximate the full matrix version of AdaGrad as a block-diagonal matrix to incorporate more curvature information during optimization. Similarly, a well-known KFAC method \citep{2015_ICML_KFAC} approximates the Fisher information matrix as a block-diagonal matrix. L-BFGS methods \cite{2016_PMLR_LBFGS} on the other hand directly construct the full Hessian matrix as a preconditioner during optimization. These preconditioners have been empirically proved to achieve fast convergence compared to SGD, as well as adaptive methods. Authors in \cite{2020_NIPS_KFAC-LBFGS} further adopt L-BFGS to efficiently compute matrix inversion in KFAC. 

Variance reduction in the preconditioning matrix is also crucial to ensure stable optimization. The algorithm in \cite{2016_PMLR_LBFGS} adopts a separate large batch of data to estimate current curvature. On the other hand, VITE \cite{VITE} chooses to use a pivot parameter together with full-batch gradients to reduce variance in the Hessian approximation.

\section{Conclusion}\label{sec:conclusion}
In this paper, we propose \method{}, a quasi-Newton method that simultaneously mitigates computation and convergence instability barriers in second-order methods, as well as scalability issues in distributed systems. By introducing momentum and damping into the Hessian update, \method{} obviates the need for highly costly estimation on the Hessian. Approximation along diagonal blocks further reduces memory and compute costs in distributed systems. Empirical analyses on CV models, such as ResNet-50 and Vision Transformer show that \method{} achieves faster convergence, and reaches similar accuracy compared to SGD.

\section*{Acknowledgements}
This material is based upon work supported by Defense Advanced Research Projects Agency (DARPA) under Contract FASTNICS HR001120C0088. The views, opinions, and/or findings expressed are those of the author(s) and should not be interpreted as representing the official views or policies of the Department of Defense or the U.S. Government.

\bibliography{main}
\bibliographystyle{tmlr}

\appendix
\section{Appendix}
The appendix is arranged as follows:
In Sec \ref{appx:hessianvec}, we include the Hessian-vector product used in \method{}. In Sec \ref{appx:proof} and \ref{appx:prooftheorem2}, we provide proofs of lemmas and theorems in the paper. In Sec \ref{appx:hparam}, we list hyperparameters used in the experiments.

\subsection{Hessian-Vector Product in L-BFGS}
\label{appx:hessianvec}
\begin{algorithm}[H]
\caption{Hessian-Vector in L-BFGS}
\label{alg:HessianVecProd}
\begin{algorithmic}[1]
\renewcommand{\algorithmicrequire}{\textbf{Input:}}
\renewcommand{\algorithmicensure}{\textbf{Output:}}
\REQUIRE $\bm{g}_t$, $\left \{ \bm{y}_i \right \}_{i=1}^M, \left \{ \bm{s}_i \right \}_{i=1}^M$
\ENSURE $\bm{g}_t$
\FOR{$i = 0,\cdots,M-1$}
    \STATE $\rho_i = \bm{s}_i^T\cdot \bm{y}_i$
\ENDFOR
\FOR{$i = 0, \cdots, M-1$}
    \STATE $\alpha_i = \frac{\bm{s}_{M-i-1}^T\bm{g}_t}{\rho_{M-i-1}}$
    \STATE $\bm{g}_t = \bm{g}_t - \alpha_i\cdot\bm{y}_{M-i-1}$
\ENDFOR
\STATE $\bm{g}_t = \HI_0\cdot \bm{g}_t$ {\color{blue}\COMMENT{$\HI_0=$ $\frac{\bm{s}_{M-1}^T\bm{y}_{M-1}}{\bm{y}_{M-1}^T\bm{y}_{M-1}}\cdot I$}}
\FOR{$i = 0, \cdots, M-1$}
    \STATE $\beta_i = \frac{\bm{y}_i^T\bm{g}_t}{\rho_i}$
    \STATE $\bm{g}_t = \bm{g}_t + (\alpha_{M-i-1}-\beta_i)\cdot \bm{s}_i$
\ENDFOR
\end{algorithmic}
\end{algorithm}

\subsection{Proof of Lemmas and Theorems}\label{appx:proof}
\subsubsection{Proof of Lemma~\ref{lemma:noise}}
\label{subsec:prooflemma:noise}
\begin{proof}
Let $\Mn{t}$ denotes the stochastic noise in $\Mg{t}$, then it can be written as:

\begin{center}
    $\Mn{t} = \beta\Mn{t-1} + (1-\beta)\bm{n}_t= \beta^t\bm{n}_0+ (1-\beta)\sum_{i=1}^t\beta^{(t-i)}\bm{n}_i$
\end{center}

Since $\bm{n}_i$ for $i=0,\cdots,t$ are independent, therefore

\begin{center}
    $E\left [ \norm{\Mn{t}}^2 \right ]=\beta^{2t}E\left [ \norm{\bm{n}_0}^2 \right ] + \sum_{i=1}^t (1-\beta)^2\beta^{2(t-i)}E\left [ \norm{\bm{n}_i}^2 \right ]$
\end{center}

Since $E\left [ \norm{\bm{n}_i}^2 \right ]$ is bounded by $\epsilon^2$, therefore 

\begin{center}
    $E\left [ \norm{\Mn{t}}^2 \right ] \leq \beta^{2t}\epsilon^2 + \sum_{i=1}^t(1-\beta)^2\beta^{2(t-i)}\epsilon^2=\epsilon^2(\beta^{2t}+\frac{(1-\beta)^2}{1-\beta^2}(1-\beta^{2t}))$
\end{center}

It is obvious that $\lim_{t\rightarrow\infty}E\left [ \norm{\Mn{t}}^2 \right ]\leq \frac{1-\beta}{1+\beta}\epsilon^2$. 

Furthermore, noise variance in $\yv{k}$ is bounded by $\lim_{t\rightarrow\infty}E\left [ \norm{\Mn{t}-\Mn{t-L}}^2 \right ] \leq \frac{4(1-\beta)}{1+\beta}\epsilon^2$.
\end{proof}

\subsubsection{Proof of Lemma~\ref{lemma:equivalence}}
\label{subsec:prooflemma:equiv}
\begin{proof}
First, consider the case with $T=1$, then

\begin{center}
    $\sv{0} = \Mp{1} - \Mp{0} = (1-\beta)(\btheta_1 - \btheta_0)$ \\
    $\yv{0} = \Mg{1} - \Mg{0} = (1-\beta)(\bm{g}_1 - \bm{g}_0)$
\end{center}

Since $\bm{g}_1-\bm{g}_0 = B(\btheta_1 - \btheta_0)$ in L-BFGS update, then it is also valid that $\yv{0} = B \sv{0}$.

Assume for $(k-1)$th approximation, $\yv{k} = \Mg{k}-\Mg{k-1} = B \sv{k} = B(\Mp{k} - \Mp{k-1})$.

Expand $\Mp{k}$, we can easily get $\yv{k}=\Mg{k}-\Mg{k-1}=(1-\beta)B(\btheta_k-\Mp{k-1})$.

Then, for $k$th approximation, we have

\begin{center}
    $B\sv{k+1} = B(\Mp{k+1} - \Mp{k}) = B(1-\beta)(\btheta_{k+1}-\Mp{k})$ \\
\end{center}

Further expand $\Mp{k}$, we get 
\begin{center}
    $B\sv{k+1} = (1-\beta)B(\btheta_{k+1}-\btheta_k+\beta(\btheta_k-\Mp{k-1}))$
\end{center}

Similarly, since $\bm{g}_{k+1}-\bm{g}_k = B(\btheta_{k+1} - \btheta_k)$, we have

\begin{center}
    $B\sv{k+1} = (1-\beta)(\bm{g}_{k+1}-\bm{g}_k) + \beta(\Mg{k}-\Mg{k-1}) = \yv{k}$.
\end{center}

Consider the case with $T>1$, $\yv{k}$ can be written as

\begin{center}
    $\yv{k} = \sum_{i=0}^{T-1}\Mg{g(k+1)T-i} - \Mg{g(k+1)T-i-1}$
\end{center}

According to the result with $T=1$, $\yv{k}$ can be further written as

\begin{center}
    $\yv{k} = \sum_{i=0}^{T-1}B(\Mp{g(k+1)T-i} - \Mp{g(k+1)T-i-1}) = B \sum_{i=0}^{T-1}(\Mp{g(k+1)T-i} - \Mp{g(k+1)T-i-1}) = B\sv{k}$. 
\end{center}

Therefore, it is equivalent to use the momentum based scheme to estimate the Hessian without losing accuracy compared to naive L-BFGS.

\end{proof}

\subsubsection{Proof of Theorem~\ref{theorem:simpleconvex}}
\label{subsec:prooftheorem:simpleconvex}
\begin{proof}
First, we show that $\frac{\left \langle \bm{s}_k, \bm{y}_k \right \rangle}{\left \langle \bm{s}_k, \bm{s}_k \right \rangle}$ is well bounded, so that most iterates using BFGS goes toward desirable direction according to \cite{BFGStool}.

Given $\left \langle \bm{s}_k, \bm{y}_k \right \rangle \geq \alpha\cdot\epsilon \left \| \bm{s}_k\right \|$, we have $\frac{\left \langle \bm{s}_k, \bm{y}_k \right \rangle}{\left \langle \bm{s}_k, \bm{s}_k \right \rangle} \geq \frac{\alpha\epsilon}{\left \| \sv{k} \right \|}$.

Let $\vvv{k}$ denotes gradient changes with no noise added, then $\yv{k} = \vvv{k} + \bm{n}_k - \bm{n}_{k-1}$. Since noise variance in $\yv{k}$ is bounded by $2\epsilon^2$, we have $E\left [ \norm{\bm{n}_k}^2 \right ] \leq \epsilon^2$.

Then,
\begin{center}
    $E\left [ \frac{\left \langle \bm{s}_k, \bm{y}_k \right \rangle}{\left \langle \bm{s}_k, \bm{s}_k \right \rangle} \right ] = E \left [ \frac{\left \langle \bm{s}_k, \bm{v}_k \right \rangle}{\left \langle \bm{s}_k, \bm{s}_k \right \rangle} - \frac{\left \langle \bm{s}_k, \bm{n}_k-\bm{n}_{k-1} \right \rangle}{\left \langle \sv{k}, \sv{k} \right \rangle}\right ]\geq \frac{\left \langle \bm{s}_k, \bm{v}_k \right \rangle}{\left \langle \bm{s}_k, \bm{s}_k \right \rangle}-\frac{2\epsilon}{\norm{\sv{k}}} \geq \lambda - \frac{2\epsilon}{\norm{\sv{k}}}$
\end{center}
According to the two inequalities above, we have $E\left [ \frac{\left \langle \bm{s}_k, \bm{y}_k \right \rangle}{\left \langle \bm{s}_k, \bm{s}_k \right \rangle} \right ] \geq \frac{\alpha}{\alpha+2}\lambda$.

As for the upper bound, first have $E\left [\inprod{\sv{k}}{ \yv{k}} \right ] = E\left [ \inprod{\sv{k}}{\vv{k}+\bm{n}_k-\bm{n}_{k-1}} \right ] \leq \inprod{\sv{k}}{\vv{k}} + 2\epsilon\norm{\sv{k}}$.

Because $H\preceq \Lambda I$, then $\inprod{\sv{k}}{\vv{k}}\leq \Lambda\norm{\sv{k}}^2$. Combined with $\inprod{\sv{k}}{\yv{k}}\geq \alpha\epsilon\norm{\sv{k}}$, we have 
\begin{equation}\label{eq:boundskyk}
    \norm{\sv{k}}\geq\frac{\alpha-2}{\Lambda}\epsilon,\quad
    \inprod{\sv{k}}{\yv{k}}\leq \norm{\sv{k}}(\Lambda\norm{\sv{k}}+2\epsilon)
\end{equation}.

With $\lambda I \preceq H \preceq \Lambda I$, we have
\begin{center}
    $\inprod{\sv{k}}{\vv{k}}\geq \frac{\lambda\Lambda}{\lambda+\Lambda}\norm{\sv{k}}^2+\frac{1}{\lambda+\Lambda}\norm{\vv{k}}^2$
\end{center}

Rearrange this inequality, we have
\begin{center}
    $\norm{\vv{k}-\frac{\lambda+\Lambda}{2}\sv{k}}\leq \frac{\Lambda-\lambda}{2}\norm{\sv{k}}$
\end{center}

Due to $\yv{k}=\vv{k}+\bm{n}_k - \bm{n}_{k-1}$, we can convert the equation above to
\begin{center}
    $E\left [\norm{\yv{k}-\frac{\lambda+\Lambda}{2}\sv{k}}^2 \right ]\leq (\frac{\Lambda-\lambda}{2}\norm{\sv{k}}+2\epsilon)^2$.
\end{center}

Expand the equation, we have
\begin{center}
    $E\left [\norm{\yv{k}}^2-(\Lambda+\lambda)\inprod{\sv{k}}{\yv{k}}+(\frac{\Lambda+\lambda}{2})^2\norm{\sv{k}}^2 \right ]\leq (\frac{\Lambda-\lambda}{2})^2\norm{\sv{k}}^2+2(\Lambda-\lambda)\norm{\sv{k}}\epsilon+4\epsilon^2$
\end{center}

Divide by $\inprod{\sv{k}}{\sv{k}}$ on both sides, we have
\begin{center}
    $E\left [\frac{\inprod{\yv{k}}{\yv{k}}}{\inprod{\sv{k}}{\sv{k}}} \right ]\leq E\left [\frac{(\Lambda+\lambda)\inprod{\sv{k}}{\yv{k}}}{\inprod{\sv{k}}{\sv{k}}} \right ] + \frac{(2\epsilon+\Lambda\norm{\sv{k}})(2\epsilon-\lambda\norm{\sv{k}})}{\inprod{\sv{k}}{\sv{k}}}$
\end{center}

According to Eq (\ref{eq:boundskyk}), we simplify the expectation as
\begin{align*}
    E\left [\frac{\inprod{\yv{k}}{\yv{k}}}{\inprod{\sv{k}}{\sv{k}}} \right ] &\leq \frac{(\Lambda+\lambda)(\Lambda\norm{\sv{k}}+2\epsilon)}{\norm{\sv{k}}}+\frac{(2\epsilon+\Lambda\norm{\sv{k}})(2\epsilon-\lambda\norm{\sv{k}})}{\inprod{\sv{k}}{\sv{k}}} \\
    &= \Lambda(\Lambda+\lambda) + \frac{2(\Lambda+\lambda)\epsilon}{\norm{\sv{k}}}+\frac{4\epsilon^2}{\norm{\sv{k}}^2} + \frac{2\Lambda\epsilon}{\norm{\sv{k}}}-\frac{2\lambda\epsilon}{\norm{\sv{k}}}-\Lambda\lambda \\
    & = \Lambda^2 + \frac{4\Lambda\epsilon}{\norm{\sv{k}}}+\frac{4\epsilon^2}{\norm{\sv{k}}^2} \\
    &\leq \Lambda^2+\frac{4\Lambda^2}{\alpha-2}+\frac{4\Lambda^2}{(\alpha-2)^2} = (\frac{\alpha}{\alpha-2}\Lambda)^2
\end{align*}

With the well-bounded Hessian approximation, we show that there is an lower bound for the optimization with noise involved. 

With $\inprod{\sv{k}}{\yv{k}}\geq \alpha\epsilon\norm{\sv{k}}$ and $E\left [ \norm{\bm{n}_t}^2 \right ] \leq \epsilon^2$, we get $\inprod{\sv{k}}{\yv{k}}=\inprod{\sv{k}}{\vv{k}+\bm{n}_{t+1}-\bm{n}_t}\leq \inprod{\sv{k}}{\vv{k}}+2\epsilon\norm{\sv{k}}$.

Then $\inprod{\sv{k}}{\vv{k}}\geq (\alpha-2)\epsilon\norm{\sv{k}}$.

With the upper bound of $H$, we have $(\alpha-2)\epsilon\norm{\sv{k}}\leq \Lambda\norm{\sv{k}}^2$.

With the lower bound of $H$, we have $\Loss(\btheta_t) - \Loss(\btheta_{t-1})\geq g_{t+1}(-\sv{k})+\frac{\lambda}{2}\norm{\sv{k}}^2$.

We focus on near-optimal region, where $\Loss({\btheta_{t+1}})=0$ and $g_{t+1} = \bm{0}$, then we get $\Loss(\btheta_t)\geq \frac{\lambda}{2}\norm{\sv{k}}^2 \geq \frac{(\alpha-2)^2\lambda}{2\Lambda^2}\epsilon^2$.

Therefore, the achievable loss by L-BFGS with noise is lower bounded by $\frac{(\alpha-2)^2\lambda}{2\Lambda^2}\epsilon^2$. 
\end{proof}

\subsubsection{Proof of Hessian Damping in Eq (\ref{eq::ydamping})}
\label{subsec:prooflemma:damping}
\begin{proof}
According to Eq (\ref{eq::ydamping}), $\sv{i}^T\yyv{i}=\sv{i}^T(\tau\yv{i} + (1-\tau)\sv{i})=(\mu\tau+1-\tau)\sv{i}^T\sv{i}$, where $\mu=\frac{\sv{i}^T\yv{i}}{\sv{i}^T\sv{i}}$.

For $\mu\leq\sigma_L$, two cases need to be considered: 

If $\tau=\tau_0$, then $\frac{1-\sigma_L}{1-\mu} \geq \tau_0$,
and $\mu\tau+1-\tau \geq \sigma_L$.

If $\tau=\frac{1-\sigma_L}{1-\mu}$, then $\mu\tau+1-\tau=\sigma_L$

Therefore, when $\mu\leq\sigma_L$, $\sv{i}^T\yyv{i}\geq\sigma_L\sv{i}^T\sv{i}$\\

For $\sigma_L < \mu < \sigma_H$:

We can write $\mu\tau+1-\tau=\mu\tau_0+1-\tau_0$.
It is easy to show that 
\begin{center}
    $\mu\tau_0+1-\tau_0-\sigma_L \geq (1-\sigma_L)(1-\tau_0) > 0$
    
    $\mu\tau_0+1-\tau_0-\sigma_H \leq (1-\sigma_H)(1-\tau_o) < 0$
\end{center}

Therefore, when $\sigma_L < \mu < \sigma_H$, $\sigma_L\sv{i}^T\sv{i} < \sv{i}^T\yyv{i} < \sigma_H\sv{i}^T\sv{i}$.\\

For $\mu\geq\sigma_H$, similarly two cases might arise: 

If $\tau=\tau_0$, then $\frac{\sigma_H-1}{\mu-1} \geq \tau_0$,
and $\mu\tau+1-\tau \leq \sigma_H$.

If $\tau=\frac{\sigma_H-1}{\mu-1}$, then $\mu\tau+1-\tau=\sigma_H$.

Therefore, when $\mu\geq\sigma_H$, $\sv{i}^T\yyv{i}\leq\sigma_H\sv{i}^T\sv{i}$\\

In summary, $\sigma_L \leq \frac{\bm{s}_i^T\cdot\hat{\bm{y}}_i}{\bm{s}_i^T\cdot\bm{s}_i} \leq \sigma_H$.
\end{proof}

\subsubsection{Proof of Lemma \ref{lemma:boundsubHessian}}
\label{subsec:prooflemma:boundsubHessian}
\begin{proof}
For any arbitrary vector $\bm{x} \neq \bm{0}$, $\bm{x}^T\cdot\HI_k\cdot\bm{x}$ can be written as 

$(\bm{x}^{1^T}\cdot\HI^1_k,\cdots, \bm{x}^{p^T}\cdot\HI^p_k)\cdot (\bm{x}^{1^T},\cdots, \bm{x}^{p^T})^T=\sum_{i=1}^p \bm{x}^{i^T}\cdot\HI^i_k\cdot\bm{x}^i$,

where $\bm{x}^i$ is a sub-vector corresponding to block $i$.

Let $\xi \equiv \min\xi^i$ and $\Xi \equiv \max\Xi^i$, therefore, $\xi \leq \bm{x}^T\cdot\HI_k\cdot\bm{x} \leq \Xi$. 
\end{proof}

\subsection{Proof of Theorem~\ref{theorem:convergence}}
\label{appx:prooftheorem2}

To prove Theorem~\ref{theorem:convergence}, we need several lemmas to bound the Hessian approximation and the gradient variance of the objective function.

\subsubsection{Some Lemmas for Theorem~\ref{theorem:convergence}}

\begin{lemma}\label{lemma:HIbound}
Given damping scheme in Eq (\ref{eq::ydamping}), at the $k$-th Hessian update, $\HI_k$ during the optimization is bounded by $\xi I \preceq \HI_k \preceq \Xi I$, where $\Xi = (M+1)\frac{1}{\sigma_L}$, and $\xi = \frac{1}{\sigma_H}$. 
\end{lemma}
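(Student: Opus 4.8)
The goal is to show $\tfrac{1}{\sigma_H} I \preceq \HI_k \preceq (M+1)\tfrac{1}{\sigma_L} I$, where $\HI_k$ is the limited-memory inverse Hessian assembled from the most recent $M$ damped curvature pairs $(\vs_j,\hat{\vy}_j)$ together with the initial $\HI_0$. The plan is to first import the curvature bound already established for the damping scheme in Appendix~\ref{subsec:prooflemma:damping}, namely $\sigma_L \le \tfrac{\vs_j^T\hat{\vy}_j}{\vs_j^T\vs_j} \le \sigma_H$, and restate it in the form most useful for the inverse update: writing $\rho_j = 1/(\vs_j^T\hat{\vy}_j)$, the rank-one correction $\rho_j\vs_j\vs_j^T$ has its unique nonzero eigenvalue $\rho_j\norm{\vs_j}^2 = \tfrac{\norm{\vs_j}^2}{\vs_j^T\hat{\vy}_j}$ lying in $[\tfrac{1}{\sigma_H},\tfrac{1}{\sigma_L}]$. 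I would then unroll the BFGS inverse recursion $\HI^{(j)} = V_j^T \HI^{(j-1)} V_j + \rho_j \vs_j\vs_j^T$, with $V_j = I - \rho_j\hat{\vy}_j\vs_j^T$, into a sum of $M+1$ terms: one congruence-transformed initial term $\Phi^T\HI_0\Phi$ and $M$ transformed rank-one terms $\rho_j\,\Psi_j^T\vs_j\vs_j^T\Psi_j$, where $\Phi$ and the $\Psi_j$ are products of the factors $V_i$.

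For the upper bound I would bound the operator norm of each of these $M+1$ summands by $\tfrac{1}{\sigma_L}$ and sum. The initial summand contributes $\norm{\HI_0}\,\norm{\Phi}^2$, and here $\norm{\HI_0}\le \tfrac{1}{\sigma_L}$ follows from the Barzilai--Borwein-type initialization $\HI_0 = \tfrac{\vs^T\hat{\vy}}{\norm{\hat{\vy}}^2}I$ together with Cauchy--Schwarz and the curvature bound, since $\tfrac{\vs^T\hat{\vy}}{\norm{\hat{\vy}}^2}\le \tfrac{\norm{\vs}^2}{\vs^T\hat{\vy}}\le\tfrac{1}{\sigma_L}$; each rank-one summand contributes $\rho_j\norm{\Psi_j^T\vs_j}^2$, which I would control using $\rho_j\norm{\vs_j}^2\le\tfrac{1}{\sigma_L}$. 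The \emph{crux} is that this works only if the products $\Phi,\Psi_j$ act non-expansively on the relevant vectors. Each factor $V_i = I - \rho_i\hat{\vy}_i\vs_i^T$ is nonsymmetric and, because $\det V_i = 1-\rho_i\vs_i^T\hat{\vy}_i = 0$, singular, with $\ker V_i = \mathrm{span}(\hat{\vy}_i)$ and $\vs_i^T V_i = 0$; I would exploit exactly these kernel/range identities, rather than a naive estimate of $\norm{V_i}$ (which is not bounded a priori), to prevent the products from amplifying the summands.

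For the lower bound $\HI_k \succeq \tfrac{1}{\sigma_H} I$ I would argue inductively on the same recursion: the congruence $V_j^T(\cdot)V_j$ preserves positive semidefiniteness and the added term $\rho_j\vs_j\vs_j^T$ is positive semidefinite, so positive definiteness propagates from $\HI_0\succeq\tfrac{1}{\sigma_H}I$, while the one direction along which the congruence collapses (namely $\hat{\vy}_j\in\ker V_j$) is restored by the rank-one term via $\hat{\vy}_j^T\HI^{(j)}\hat{\vy}_j = \vs_j^T\hat{\vy}_j$, whose size is governed by the damped curvature bound. The main obstacle throughout is the same delicate point: controlling the singular, nonsymmetric congruence factors $V_i$ so that both the transformed initial term and the $M$ transformed rank-one terms remain sandwiched between $\tfrac{1}{\sigma_H}$ and $\tfrac{1}{\sigma_L}$. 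Once the kernel/range structure of $V_i$ is combined with the damped curvature bound, the $M+1$ summands each contribute at most $\tfrac{1}{\sigma_L}$ (upper bound) while the construction never drops below $\tfrac{1}{\sigma_H}$ (lower bound), which is precisely the claim.
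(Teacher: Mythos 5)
Your unrolled decomposition is algebraically sound (it is just the paper's induction written out), and your Cauchy--Schwarz bound $\frac{\sv{}^T\yyv{}}{\norm{\yyv{}}^2}\le\frac{\norm{\sv{}}^2}{\sv{}^T\yyv{}}\le\frac{1}{\sigma_L}$ for the initialization is correct. But the proposal stalls at exactly the point you yourself label the crux, and the gap cannot be closed the way you suggest. The identities $V_i\yyv{i}=0$ and $\sv{i}^TV_i=0$ constrain $V_i$ only on $\mathrm{span}\{\sv{i},\yyv{i}\}$; they say nothing about $V_i^T\sv{j}$ for $j\neq i$, and that cross term is where amplification occurs. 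The damping proof (Sec \ref{subsec:prooflemma:damping}) delivers only the Rayleigh-quotient bound $\sigma_L\le\sv{i}^T\yyv{i}/\norm{\sv{i}}^2\le\sigma_H$, which permits $\yyv{i}=\sigma_L\sv{i}+c\,\vw$ with $\vw\perp\sv{i}$ and $c$ arbitrarily large. Then for a generic older pair, $\norm{V_i^T\sv{j}}=\norm{\sv{j}-\rho_i\sv{i}(\yyv{i}^T\sv{j})}$ grows linearly in $c$, so the summand $\rho_j\norm{\Psi_j^T\sv{j}}^2$ is unbounded; and since every summand in your decomposition is positive semidefinite, nothing can cancel the blow-up. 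So the non-expansiveness you need is not merely unproven --- under the hypotheses you correctly extracted from Eq (\ref{eq::ydamping}) it is false, and indeed the conclusion $\HI_k\preceq(M+1)\frac{1}{\sigma_L}I$ itself fails in this regime. The lower bound half has the same defect: the secant identity pins the Rayleigh quotient of $\HI^{(j)}$ along $\yyv{j}$ at $\sv{j}^T\yyv{j}/\norm{\yyv{j}}^2$, and the damping bound (which controls $\sv{j}^T\yyv{j}/\norm{\sv{j}}^2$, a different quantity) does not keep this above $1/\sigma_H$ once $\norm{\yyv{j}}\gg\norm{\sv{j}}$; the same is true already for $\HI_0$.

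For comparison, the paper's proof (induction on $k$ rather than unrolling, otherwise the same algebra) crosses this gap by two assertions: (i) the damped pair can be written $\yyv{k}=H_k\sv{k}$ for some matrix $\sigma_L I\preceq H_k\preceq\sigma_H I$ --- strictly stronger than what the damping lemma proves, since it forces $\norm{\yyv{k}}\le\sigma_H\norm{\sv{k}}$ --- and (ii) $Q^TQ=I+P$ with $P$ rank one with eigenvalue $-1$, i.e.\ $Q^TQ\preceq I$, which is precisely the contraction property your plan requires. Note that (ii) holds only when $\yyv{k}\parallel\sv{k}$: in general $Q^TQ$ has eigenvalues $1$ (multiplicity $d-2$), $0$, and $\norm{\sv{k}}^2\norm{\yyv{k}}^2/(\sv{k}^T\yyv{k})^2\ge 1$. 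So your instinct that the singular, nonsymmetric congruence factors are the delicate point is exactly right --- the paper asserts its way past the same point --- but to finish any version of this argument you must import an additional hypothesis bounding $\norm{\yyv{k}}^2/(\sv{k}^T\yyv{k})$ (equivalently, controlling the component of $\yyv{k}$ orthogonal to $\sv{k}$), and then track the cross terms $V_i^T\sv{j}$ quantitatively; the kernel/range structure of the $V_i$ alone cannot do it.
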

\begin{proof}
\textbf{Lower bound}: 
$\HI$ is initialized as $\HI_0 = \frac{\sv{0}^T\yyv{0}}{\yyv{0}^T\yyv{0}}\cdot I$. 
According to the damping scheme \ref{subsec:prooflemma:damping}, there exists $H_0(\btheta)\preceq \sigma_H I$ such that $\yyv{0} = H_0\cdot\sv{0}$.

Therefore, $\frac{\sv{0}^T\yyv{0}}{\yyv{0}^T\yyv{0}}\cdot I = \frac{\sv{0}^TH_0\sv{0}}{\sv{0}^TH_0\cdot H_0\sv{0}}\cdot I=\frac{(\sv{0}^TH_0^{1/2})(H_0^{1/2}\sv{0})}{(\sv{0}^TH_0^{1/2})\cdot H_0\cdot(H_0^{1/2}\sv{0})}\cdot I \succeq \frac{1}{\sigma_H} I$.

Then for $k\geq 1$, assuming $\HI_{k-1} \succeq \frac{1}{\sigma_H}I$ hold, based on Eq (\ref{eq::BFGSupdate}), $\HI_{k} = (I - \rho_k\yyv{k}\sv{k}^T)^T\HI_{k-1}(I-\rho_k\yyv{k}\sv{k}^T)+\frac{\sv{k}\sv{k}^T}{\sv{k}^T\yyv{k}}$.

Because $(I - \rho_k\yyv{k}\sv{k}^T)^T\HI_{k-1}(I-\rho_k\yyv{k}\sv{k}^T)$ is positive definite, we can bound $\HI_k$ as: $\HI_k \succeq \frac{\sv{k}\sv{k}^T}{\sv{k}^T\yyv{k}} = \frac{\sv{k}\sv{k}^T}{\sv{k}^T\cdot H_k\cdot\sv{k}} \succeq \frac{1}{\sigma_H} I$

Therefore, lower bound of $\HI_k$, $\xi = \frac{1}{\sigma_H}$.

\textbf{Upper bound}:
Since $H_0(\btheta) \succeq \sigma_L$, we can get
$\frac{\sv{0}^T\yyv{0}}{\yyv{0}^T\yyv{0}}\cdot I = \frac{(\sv{0}^TH_0^{1/2})(H_0^{1/2}\sv{0})}{(\sv{0}^TH_0^{1/2})\cdot H_0\cdot(H_0^{1/2}\sv{0})}\cdot I \preceq \frac{1}{\sigma_L}$.

Similarly, for $k\geq 1$, we assume $\HI_{k-1} \preceq k\frac{1}{\sigma_L}$ hold. 

For the first part in Eq (\ref{eq::BFGSupdate}), let $Q = (I - \rho_k\yyv{k}\sv{k}^T)$. Then for $\forall$ $\bm{x} \neq \bm{0}$, $\bm{x}^T\cdot Q^T \HI_{k-1} Q \cdot \bm{x} = (Q\bm{x})^T\cdot\HI_{k-1}\cdot (Q\bm{x}) \leq \frac{k}{\sigma_L}\bm{x}^T\cdot(Q^TQ)\cdot\bm{x}$.

Let $P = \frac{\sv{k}\yyv{k}^T\yyv{k}\sv{k}^T}{\sv{k}^T\yyv{k}\sv{k}^T\yyv{k}} - \frac{\yyv{k}\bm{s_k^T}+\sv{k}\yyv{k}^T}{\sv{k}^T\yyv{k}}$, then $Q^TQ = I + P$.

Because $P$ is rank-1 matrix with eigenvalue $-1$ and eigenvector $\yyv{k}$,
we have $\bm{x}^T\cdot Q^T \HI_{k-1} Q \cdot \bm{x} \leq \frac{k}{\sigma_L}\bm{x}^T\cdot(I+P)\cdot\bm{x} \leq \frac{k}{\sigma_L}\bm{x}^T\bm{x}$

For the second part in Eq (\ref{eq::BFGSupdate}), we can directly get $\frac{\sv{k}\sv{k}^T}{\bm{s}^T\bm{y}_k} = \frac{\sv{k}\sv{k}^T}{\sv{k}^T\cdot H_k \cdot \sv{k}} \preceq \frac{1}{\sigma_L}I$.

Therefore, $\bm{x}^T\cdot\HI_k\cdot\bm{x} \leq \frac{k}{\sigma_L}\bm{x}^T\bm{x} + \frac{1}{\sigma_L}\bm{x}^T\bm{x} = \frac{k+1}{\sigma_L} \bm{x}^T\bm{x}$

In \method{}, $k$ is at most $M$, which is the length of history vector, therefore $\Xi = (M+1)\frac{1}{\sigma_L}$.

In summary, $\frac{1}{\sigma_H} I \preceq \HI_k \preceq (M+1)\frac{1}{\sigma_L} I$
\end{proof}

\begin{lemma}\label{lemma:gradvariance}
Assume AS~\ref{as:smooth} holds, then loss function $\mathcal{L}(\btheta)$ is at least $\Lambda$-smooth. At iteration $t$ with mini-batch input $\mathcal{S}_t$, where each sample is randomly sampled from  $\mathcal{X}$ with replacement, gradient $\nabla\mathcal{L}(\btheta_t; \mathcal{S}_t)$ satisfies
\begin{center}
    $E_{\mathcal{S}_t}[\norm{\grad{L}(\btheta_t;\mathcal{S}_t)}^2] \leq 2\Lambda\cdot\mathcal{L}(\btheta_t)$.
\end{center}
\end{lemma}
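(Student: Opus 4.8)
The plan is to reduce the mini-batch bound to a per-sample \emph{self-bounding} property of each $\Lambda$-smooth component $\ell_i$ and then average. First I would record that $\Loss$ inherits $\Lambda$-smoothness: since $\grad{L} = \frac{1}{N}\sum_{i=1}^N\nabla\ell_i$, the triangle inequality together with AS~\ref{as:smooth} gives $\norm{\grad{L}(\btheta_2)-\grad{L}(\btheta_1)} \le \frac{1}{N}\sum_i\norm{\nabla\ell_i(\btheta_2)-\nabla\ell_i(\btheta_1)} \le \Lambda\norm{\btheta_2-\btheta_1}$, so $\Loss$ is at least $\Lambda$-smooth, which disposes of the first assertion.

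The crux is the self-bounding inequality $\norm{\nabla\ell_i(\btheta)}^2 \le 2\Lambda\,\ell_i(\btheta)$ for each $i$, which I would obtain from the descent lemma. Assuming the component losses are nonnegative (as holds for the usual training objectives such as cross-entropy or squared loss), $\Lambda$-smoothness applied at the gradient-step point yields, for any $\btheta$, $\ell_i(\btheta - \frac{1}{\Lambda}\nabla\ell_i(\btheta)) \le \ell_i(\btheta) - \frac{1}{2\Lambda}\norm{\nabla\ell_i(\btheta)}^2$; since the left-hand side is nonnegative, rearranging gives exactly $\norm{\nabla\ell_i(\btheta)}^2 \le 2\Lambda\,\ell_i(\btheta)$. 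This is the main obstacle, in the sense that the inequality is \emph{not} a pure consequence of smoothness: it genuinely requires the lower-boundedness (nonnegativity) of each $\ell_i$, so I would state that hypothesis explicitly rather than letting it stay implicit.

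Finally I would assemble the mini-batch statement. Writing $\grad{L}(\btheta_t;\mathcal{S}_t) = \frac{1}{|\mathcal{S}_t|}\sum_{j\in\mathcal{S}_t}\nabla\ell_j(\btheta_t)$ and using convexity of $\norm{\cdot}^2$ (Jensen's inequality) over the batch, $\norm{\grad{L}(\btheta_t;\mathcal{S}_t)}^2 \le \frac{1}{|\mathcal{S}_t|}\sum_{j\in\mathcal{S}_t}\norm{\nabla\ell_j(\btheta_t)}^2$. Taking $E_{\mathcal{S}_t}$ and using that each index in $\mathcal{S}_t$ is drawn uniformly from $\mathcal{X}$ with replacement, every summand has expectation $\frac{1}{N}\sum_{i=1}^N\norm{\nabla\ell_i(\btheta_t)}^2$, whence $E_{\mathcal{S}_t}[\norm{\grad{L}(\btheta_t;\mathcal{S}_t)}^2] \le \frac{1}{N}\sum_{i=1}^N\norm{\nabla\ell_i(\btheta_t)}^2$. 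Applying the self-bounding inequality termwise and recognizing $\frac{1}{N}\sum_i\ell_i(\btheta_t) = \Loss(\btheta_t)$ then delivers the claimed $E_{\mathcal{S}_t}[\norm{\grad{L}(\btheta_t;\mathcal{S}_t)}^2] \le 2\Lambda\,\Loss(\btheta_t)$. Note that the Jensen step is what lets the bound hold uniformly in the batch size, so I would keep it rather than trying to argue directly on the averaged gradient.
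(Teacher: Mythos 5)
Your proof is correct, but it takes a genuinely different route from the paper's. Where you bound $\norm{\grad{L}(\btheta_t;\mathcal{S}_t)}^2$ via Jensen's inequality (convexity of $\norm{\cdot}^2$ over the batch average) and then apply the self-bounding inequality $\norm{\nabla\ell_i}^2 \le 2\Lambda\,\ell_i$ termwise, the paper instead expands the squared norm of the mini-batch gradient exactly into diagonal and cross terms, uses the with-replacement independence of the samples to identify the cross terms with $\norm{\grad{\mathcal{L}}(\btheta_t)}^2$, arrives at the exact identity $E_{\mathcal{S}_t}[\norm{\grad{L}(\btheta_t;\mathcal{S}_t)}^2] = \frac{1}{b^2}\sum_i E[\norm{\nabla\ell_i}^2] + \frac{b-1}{b}\norm{\grad{\mathcal{L}}(\btheta_t)}^2$, and then applies the same self-bounding inequality to both the per-sample and full-batch gradients. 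The two arguments trade off as follows: yours is more elementary and strictly more general --- it never uses independence, only that each index is marginally uniform, so it would survive, e.g., sampling without replacement --- whereas the paper's exact decomposition exposes the $1/b$ variance-reduction structure, which would give a sharper bound of the form $\frac{2\Lambda}{b}\Loss + \frac{b-1}{b}\norm{\grad{\mathcal{L}}}^2$ had the authors not discarded it by bounding the cross term as well. One further merit of your write-up: you correctly observe that the self-bounding step is \emph{not} a consequence of smoothness alone but requires nonnegativity (lower-boundedness) of each $\ell_i$, and you derive it explicitly from the descent lemma at the point $\btheta - \frac{1}{\Lambda}\nabla\ell_i(\btheta)$; the paper invokes this inequality directly from AS~\ref{as:smooth} and leaves that hypothesis implicit, so your version is the more careful of the two on this point.
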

\begin{proof}
\textbf{Smoothness of} $\mathcal{L}(\btheta)$.

Given AS \ref{as:smooth} hold, we have
$\norm{\grad{\ell}_i(\btheta_1)-\grad{\ell}_i(\btheta_2)} \leq \Lambda \norm{\btheta_1 - \btheta_2}$.

For $\Loss$, we have
$\grad{\Loss}(\btheta_1) - \grad{\Loss}(\btheta_2) = \frac{1}{N}\sum_{i=1}^N \grad{\ell}_i(\btheta_1)-\grad{\ell}_i(\btheta_2)$.

Then,
\begin{align*}
    \norm{\grad{\Loss}(\btheta_1) - \grad{\Loss}(\btheta_2)} &= \frac{1}{N}\norm{\sum_{i=1}^N \grad{\ell}_i(\btheta_1)-\grad{\ell}_i(\btheta_2)}\\
    &\overset{\text{TI}}{\leq} \frac{1}{N}\sum_{i=1}^N \norm{\grad{\ell}_i(\btheta_1)-\grad{\ell}_i(\btheta_2)}\\
    &\leq \frac{1}{N}\sum_{i=1}^N \Lambda\norm{\btheta_1 - \btheta_2}\\
    &= \Lambda\norm{\btheta_1 - \btheta_2}
\end{align*}
``TI" indicates triangle inequality.
Therefore, $\Loss$ is at least $\Lambda$-smooth.

\textbf{Gradient variance bound}.

$E_{\mathcal{S}_t}[\norm{\grad{L}(\btheta_t;\mathcal{S}_t)}^2] = E_{\mathcal{S}_t}[\left \langle {\frac{1}{b}\nabla\sum_{i=1}^b \ell_i(\btheta_t)}, {\frac{1}{b}\nabla\sum_{i=1}^b \ell_i(\btheta_t)} \right \rangle]$\\

Expand summation and regroup,

$E_{\mathcal{S}_t}[\norm{\grad{L}(\btheta_t;\mathcal{S}_t)}^2] = E_{\mathcal{S}_t}[\frac{1}{b^2}\sum_{i=1}^b\norm{\nabla\ell_i(\btheta_t)}^2 + \frac{1}{b^2}\sum_{i=1}^b\sum_{j=1,\neq i}^b \left \langle \nabla\ell_i(\btheta_t), \nabla\ell_j(\btheta_t) \right \rangle]$\\

Take expectation on each sample,

$E_{\mathcal{S}_t}[\norm{\grad{L}(\btheta_t;\mathcal{S}_t)}^2] = \frac{1}{b^2}\sum_{i=1}^b E_{\bm{x}_i}[\norm{\nabla\ell_i(\btheta_t)}^2] + \frac{1}{b^2}\sum_{i=1}^b\sum_{j=1,\neq i}^b E_{\bm{x}_i, \bm{x}_j}[\left \langle \nabla\ell_i(\btheta_t), \nabla\ell_j(\btheta_t) \right \rangle]$\\

Because $\bm{x}_i$ and $\bm{x}_j$ are independent, the second part can be simplified as,

$E_{\mathcal{S}_t}[\norm{\grad{L}(\btheta_t;\mathcal{S}_t)}^2] = \frac{1}{b^2}\sum_{i=1}^b E_{\bm{x}_i}[\norm{\nabla\ell_i(\btheta_t)}^2] + \frac{1}{b^2}\sum_{i=1}^b\sum_{j=1,\neq i}^b \norm{\grad{\mathcal{L}}(\btheta_t)}^2$\\

With further simplification, we get

$E_{\mathcal{S}_t}[\norm{\grad{L}(\btheta_t;\mathcal{S}_t)}^2] = \frac{1}{b^2}\sum_{i=1}^b E_{\bm{x}_i}[\norm{\nabla\ell_i(\btheta_t)}^2] + \frac{b-1}{b}\norm{\grad{\mathcal{L}}(\btheta_t)}^2$\\

Given AS \ref{as:smooth}, we have
$\norm{\nabla\ell_i(\btheta_t)}^2 \leq 2\Lambda\cdot\ell_i(\btheta_t)$ and 
$\norm{\grad{\mathcal{L}}(\btheta_t)}^2 \leq 2\Lambda\cdot\mathcal{L}(\btheta_t)$\\

Therefore, 
\begin{align*}
    E_{\mathcal{S}_t}[\norm{\grad{L}(\btheta_t;\mathcal{S}_t)}^2] &\leq \frac{1}{b^2}\sum_{i=1}^b E_{\bm{x}_i}[2\Lambda\ell_i(\btheta_t)] + \frac{b-1}{b} 2\Lambda\mathcal{L}(\btheta_t)\\
    &= \frac{1}{b}2\Lambda\mathcal{L}(\btheta_t) + \frac{b-1}{b}2\Lambda\mathcal{L}(\btheta_t) \\
    &= 2\Lambda\cdot\mathcal{L}(\btheta_t)
\end{align*}
\end{proof}

\subsubsection{Proof of Theorem~\ref{theorem:convergence}}
\begin{proof}
Given AS~\ref{as:smooth} and Lemma~\ref{lemma:gradvariance}, $\Loss(\btheta_{t})$ can be bounded by $\Loss(\btheta_{t})\leq \Loss(\btheta_{t-1}) + \nabla\Loss(\btheta_{t-1})^T(\btheta_{t} - \btheta_{t-1}) + \frac{\Lambda}{2}\left \| \btheta_{t}-\btheta_{t-1} \right \|^2$ for $\forall \btheta_{t-1}, \btheta_{t}$

In \method{}, $\btheta_{t} = \btheta_{t-1} - \eta_{t-1}\HI_k\nabla\mathcal{L}(\btheta_{t-1};\mathcal{S}_{t-1})$.
Therefore, we can upper bound $\Loss(\btheta_{t})$ as:
\begin{align*}
    \Loss(\btheta_{t}) \leq &\Loss(\btheta_{t-1})-\eta_{t-1}\cdot\nabla\Loss(\btheta_{t-1})^T\HI_k\nabla\Loss(\btheta_{t-1};\mathcal{S}_{t-1})\\
    &+n_{t-1}^2\frac{\Lambda}{2}\left \| \HI_k\nabla\Loss(\btheta_{t-1};\mathcal{S}_{t-1}) \right \|^2\\
    \leq &\Loss(\btheta_{t-1})-\eta_{t-1}\cdot\nabla\Loss(\btheta_{t-1})^T\HI_k\nabla\Loss(\btheta_{t-1};\mathcal{S}_{t-1})\\
    &+n_{t-1}^2\frac{\Lambda\Xi^2}{2}\left \| \nabla\Loss(\btheta_{t-1};\mathcal{S}_{t-1}) \right \|^2
\end{align*}

Since $\HI_k$ is independent with $\mathcal{S}_{t-1}$, we take expectation w.r.t $\mathcal{S}_{t-1}$ and $\mathcal{S}_{t}$,
\begin{align*}
    E_{\mathcal{S}_{t}}[\Loss(\btheta_{t})|\mathcal{S}_{t-1}] \leq 
    & \Loss(\btheta_{t-1}) - \eta_{t-1}\cdot\nabla\Loss(\btheta_{t-1})^T\HI_kE_{\mathcal{S}_{t-1}}[\nabla\Loss(\btheta_{t-1};\mathcal{S}_{t-1})]\\
    & + n_{t-1}^2\frac{\Lambda\Xi^2}{2}E_{\mathcal{S}_{t-1}}[\left \| \nabla\Loss(\btheta_{t-1};\mathcal{S}_{t-1}) \right \|^2]\\
    = & \Loss(\btheta_{t-1}) - \eta_{t-1}\cdot\nabla\Loss(\btheta_{t-1})^T\HI_k\nabla\Loss(\btheta_{t-1})\\
    & + n_{t-1}^2\frac{\Lambda\Xi^2}{2}E_{\mathcal{S}_{t-1}}[\left \| \nabla\Loss(\btheta_{t-1};\mathcal{S}_{t-1}) \right \|^2]\\
\end{align*}

According AS~\ref{as:pl}, we have
\begin{center}
    $L(\btheta_{t-1}) \leq \frac{1}{\lambda}\norm{\grad{\Loss}(\btheta_{t-1})}^2$
\end{center}

According to Lemma~\ref{lemma:gradvariance}, we have
\begin{center}
    $E_{\mathcal{S}_{t-1}}[\norm{\grad{\Loss}(\btheta_{t-1}; \mathcal{S}_{t-1})}^2] \leq 2\Lambda\Loss(\btheta_{t-1})$
\end{center}

Therefore, $E_{\mathcal{S}_{t}}[\Loss(\btheta_{t})|\mathcal{S}_{t-1}] \leq
    \mathcal{L}(\btheta_{t-1}) -\eta_{t-1}\lambda\xi\Loss(\btheta_{t-1})+\eta_{t-1}^2\Lambda^2\Xi^2\Loss(\btheta_{t-1})$.
    
After simply regrouping, we can get
\begin{center}
    $E_{\mathcal{S}_{t}}[\Loss(\btheta_{t})|\mathcal{S}_{t-1}] \leq
    (1-\eta_{t-1}\lambda\xi + \eta_{t-1}^2\Lambda^2\Xi^2)[\Loss(\btheta_{t-1}]$
\end{center}

Apply total expectation rule w.r.t $\mathcal{S}_t$, we have
\begin{center}
    $E_{\mathcal{S}_{t}}[\Loss(\btheta_{t})] \leq (1-\eta_{t-1}\lambda\xi + \eta_{t-1}^2\Lambda^2\Xi^2 ) E_{\mathcal{S}_{t-1}}[\Loss(\btheta_{t-1})]$
\end{center}
\end{proof}

\subsection{Hyperparameter Settings}\label{appx:hparam}
\subsubsection{CIFAR-10/CIFAR-100}\label{appx:hparam:cifar}
For ResNet-18 and DeiT, detailed settings are shown in Table \ref{tab:hyperparam:resnet18cifar}. The batch size is set to be $256$ for all optimizers. For the learning rate, we use a cosine annealing scheduling strategy with a minimum learning rate of 0.0001. Maximum epoch is $150$ for ResNet-18 and $50$ for DeiT.

\begin{table}[htb]
\centering
\caption{\footnotesize Hyperparameters for SGD, Adam, \method{} on CIFAR-10/CIFAR-100}
\begin{tabular}{c|c|c|c|c|c|c|c|c}
\toprule
 Optimizer & $b$ & lr & momentum & $wd$ & $\tau_0$ & $\beta$ & $T$ & $M$\\
 \midrule
 SGD & 256 & 0.1 & 0.9 & 1e-4 & - & - & - &-\\
 Adam & 256 & 0.01 & 0.9/0.999 & 1e-2 & - & - & - &-\\
 \method{} & 256 & 0.1 & 0.9 & 2e-4 & 0.99 & 0.999 & 50 & 10\\
\bottomrule
\end{tabular}
\label{tab:hyperparam:resnet18cifar}\\
\footnotesize{$\beta$: momentum for the Hessian; $\tau_0$: initial damping; $T$: frequency for updating the Hessian; $M$: length of history vector ($\sv{}, \yv{}$)}
\end{table}

\subsubsection{ImageNet}\label{appx:hparam:imagenet}
For ResNet50, detailed settings are shown in Table \ref{tab:hyperparam:resnet50}. The batch size is set to be $512$ for all optimizers. For the learning rate, we used a cosine annealing scheduling strategy with a minimum learning rate of 0.0001. Maximum epoch is $100$.

\begin{table}[htb]
\centering
\caption{\footnotesize Hyperparameters for SGD, Adam, KFAC and \method{} of ResNet50 on ImageNet}
\begin{tabular}{c|c|c|c|c|c|c|c|c}
\toprule
 Optimizer & $b$ & lr & momentum & $wd$ & $\tau_0$ & $\beta$ & $T$ & $M$\\
 \midrule
 SGD & 512 & 0.1 & 0.9 & 1e-4 & - & - & - &-\\
 Adam & 512 & 0.07 & 0.9/0.999 & 3e-2 & - & - & - &-\\
 KFAC & 512 & 0.06 & 0.9 & 3e-2 & - & - & 50 &-\\
 \method{} & 256 & 0.1 & 0.9 & 2e-4 & 0.99 & 0.999 & 50 & 10\\
\bottomrule
\end{tabular}
\label{tab:hyperparam:resnet50}\\
\footnotesize{$\beta$: momentum for the Hessian; $\tau_0$: initial damping; $T$: frequency for updating the Hessian; $M$: length of history vector ($\sv{}, \yv{}$)}
\end{table}

\subsection{Ablation Study: Impact of Granularity of Block-wise Approximation}

\begin{figure}[!htb]
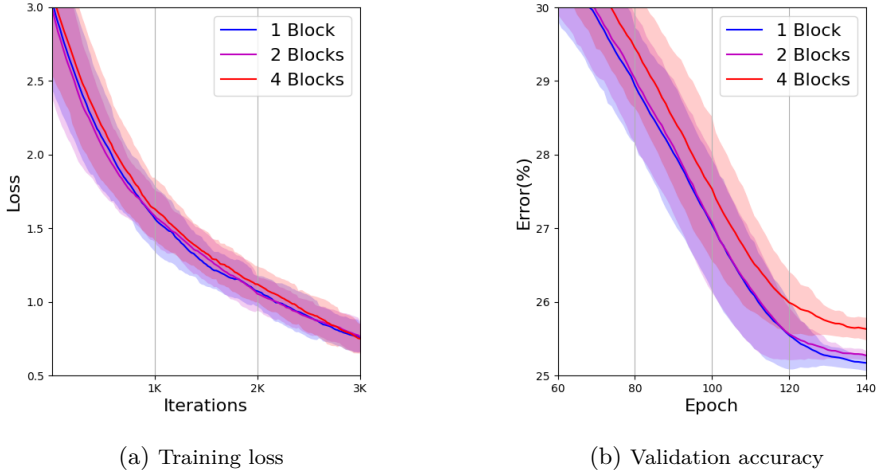

    \centering
    \begin{subfigure}{.4\textwidth}
        \centering
        \includegraphics[width=.8\linewidth]{plot/ablation_blk_loss.pdf}
        \caption{\footnotesize Training loss}
    \end{subfigure}
    \begin{subfigure}{.4\textwidth}
        \centering
        \includegraphics[width=.8\linewidth]{plot/ablation_blk_acc.pdf}
        \caption{\footnotesize Validation accuracy}
    \end{subfigure}
    \caption{\footnotesize Training loss and validation accuracy of using mL-BFGS on ResNet-18/CIFAR-100 with different block sizes.}
    \label{fig:ablation:blocks}
\end{figure}

We study the impact of granularity of block-wise approximation in this section. We train ResNet-18 on CIFAR-100 using mL-BFGS with a different number of Hessian blocks: 1, 2, 4 blocks. mL-BFGS with one block approximates the whole Hessian matrix, while mL-BFGS with two blocks approximates two diagonal Hessian blocks with each one consisting of 4 \emph{resblocks} in ResNet-18. At last, mL-BFGS with four blocks is the default setting in our CIFAR-100 experiment, where each block consists of 2 \emph{resblocks}. As shown in Figure \ref{fig:ablation:blocks}, as we increase the size of the blocks, mL-BFGS converges faster and achieve better generalization performance. This observation aligns with the argument that mL-BFGS can estimate more accurate curvature information by approximating a large Hessian block, improving training performance.

\subsection{Justification of PL-condition in Real Neural Networks}
\begin{figure}[!htb]
    \centering
    \includegraphics[width=.5\linewidth]{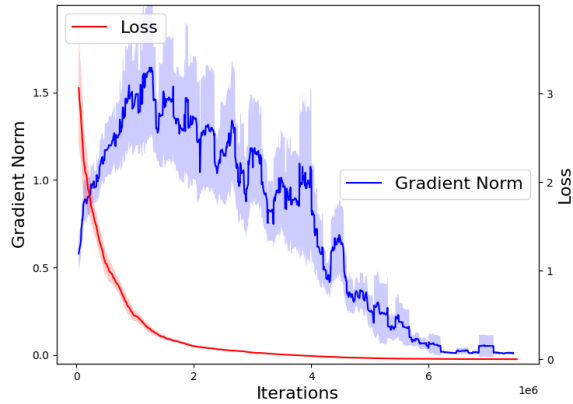}
    \caption{\footnotesize The relation of gradient norm and loss during training (ResNet-18 on CIFAR-100).}
    \label{fig:ablation:pl}
\end{figure}

Figure \ref{fig:ablation:pl} shows gradient norm and loss when training ResNet-18 on CIFAR-100. We can observe that with an appropriate parameter $\lambda$, the PL-condition can be easily satisfied.

\subsection{Other Second-order Methods on CIFAR-10/100}

\begin{figure}[!htb]
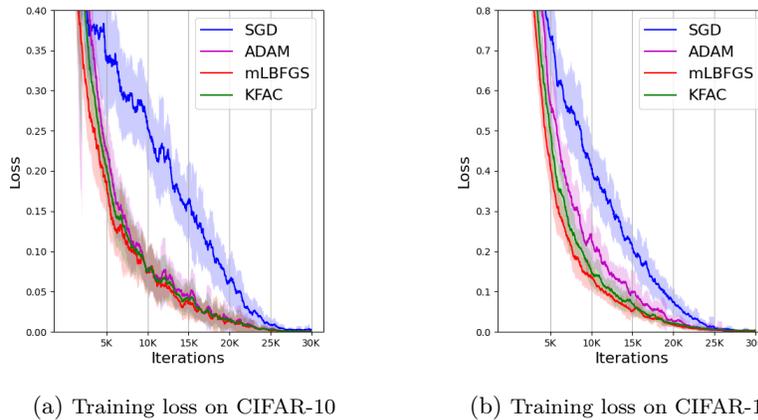

    \centering
    \begin{subfigure}{.35\textwidth}
        \centering
        \includegraphics[width=.8\linewidth]{plot/cifar10_resnet_more.pdf}
        \caption{\footnotesize Training loss on CIFAR-10}
    \end{subfigure}
    \begin{subfigure}{.35\textwidth}
        \centering
        \includegraphics[width=.8\linewidth]{plot/cifar100_resnet_more.pdf}
        \caption{\footnotesize Training loss on CIFAR-100}
    \end{subfigure}
    \caption{\footnotesize Training loss using mL-BFGS, KFAC, Adam and SGD on CIFAR-10/100.}
    \label{fig:cifar:more}
\end{figure}

Figure \ref{fig:cifar:more} show training loss of using mL-BFGS, KFAC, Adam, and SGD on CIFAR-10/100. Compared to the second-order method KFAC, mL-BFGS converges faster as it can estimate more accurate curvature information by approximating large Hessian blocks. 
On the validation dataset, mL-BFGS achieves higher accuracy than KFAC (KFAC on CIFAR-10: $93.1\pm0.1$, CIFAR-100: $73.2\pm0.13$).

\end{document}